\documentclass[11pt]{article}
\usepackage[margin=1in]{geometry}
\usepackage[T1]{fontenc}
\usepackage{times}
\usepackage{microtype}
\usepackage{amsmath,amssymb,amsthm}
\usepackage[round,authoryear]{natbib}
\usepackage[algo2e,ruled,vlined,linesnumbered]{algorithm2e}
\usepackage{placeins}

\newtheorem{theorem}{Theorem}[section]

\newtheorem{lemma}[theorem]{Lemma}

\theoremstyle{remark}

\theoremstyle{plain}

\newenvironment{proofof}[1]{\begin{proof}[Proof of #1]}{\end{proof}}

\usepackage{amsfonts}
\usepackage{bm}
\usepackage{xcolor}
\usepackage{mathtools}
\usepackage{xspace}
\usepackage{booktabs}
\usepackage{microtype}
\usepackage{hyperref}
\usepackage{cleveref}

\hypersetup{
  colorlinks=true,
  breaklinks=true,
  linkcolor=blue!55!black,
  citecolor=blue!55!black,
  urlcolor=blue!55!black
}
\allowdisplaybreaks

\newcommand{\wh}{\widehat}

\def\E{\mathbb{E}}

\newcommand{\ind}{\mathbb{I}}

\newcommand{\Et}{\mathbb{E}_t}

\newcommand{\one}{\boldsymbol 1}

 \newcommand{\ellhat}{\wh\ell}

\DeclareMathOperator*{\argmax}{argmax}

\newcommand{\order}{\mathcal O}
\newcommand{\otil}{\widetilde{\mathcal O}}

\newcommand{\ip}[2]{\left\langle #1,#2\right\rangle}

\newcommand{\Reg}{\mathrm{Reg}}

\newcommand{\halftick}{\ensuremath{\checkmark\kern-1.1ex\raisebox{.7ex}{\rotatebox[origin=c]{125}{--}}}}

\usepackage{prettyref}
\newcommand{\savehyperref}[2]{\texorpdfstring{\hyperref[#1]{#2}}{#2}}
\newcommand{\pref}[1]{\prettyref{#1}}

\newrefformat{sec}{\savehyperref{#1}{Section~\ref*{#1}}}
\newrefformat{app}{\savehyperref{#1}{Appendix~\ref*{#1}}}
\newrefformat{eq}{\savehyperref{#1}{Eq.~\textup{(\ref*{#1})}}}
\newrefformat{thm}{\savehyperref{#1}{Theorem~\ref*{#1}}}
\newrefformat{lem}{\savehyperref{#1}{Lemma~\ref*{#1}}}
\newrefformat{prop}{\savehyperref{#1}{Proposition~\ref*{#1}}}
\newrefformat{propt}{\savehyperref{#1}{Proposition~\ref*{#1}}}
\newrefformat{assum}{\savehyperref{#1}{Assumption~\ref*{#1}}}
\newrefformat{def}{\savehyperref{#1}{Definition~\ref*{#1}}}
\newrefformat{cor}{\savehyperref{#1}{Corollary~\ref*{#1}}}
\newrefformat{rem}{\savehyperref{#1}{Remark~\ref*{#1}}}
\newrefformat{alg}{\savehyperref{#1}{Algorithm~\ref*{#1}}}
\newrefformat{fig}{\savehyperref{#1}{Figure~\ref*{#1}}}
\newrefformat{tab}{\savehyperref{#1}{Table~\ref*{#1}}}

\definecolor{wine_red}{RGB}{228,48,64}
\definecolor{DSgray}{cmyk}{0,1,0,0}

\RestyleAlgo{ruled}
\SetAlgoVlined
\newrefformat{line}{\savehyperref{#1}{Line~\ref*{#1}}}

\title{Achieving Minimax-Optimal First-Order and Best-of-Both-Worlds Regret Simultaneously in Multi-Armed Bandits}
\author{Mengxiao Zhang \\
University of Iowa \\
\texttt{mengxiao-zhang@uiowa.edu}}
\date{}
\hypersetup{
  pdftitle={Achieving Minimax-Optimal First-Order and Best-of-Both-Worlds Regret Simultaneously in Multi-Armed Bandits},
  pdfauthor={Mengxiao Zhang}
}

\newcommand{\simplex}{\Delta_K}
\renewcommand{\Reg}{\mathsf{Reg}}

\begin{document}
\maketitle

\begin{abstract}
We study first-order and best-of-both-worlds regret guarantees in multi-armed bandits. \citet{pogodin2020first} proposed an algorithm that combines minimax-optimal worst-case regret with a first-order guarantee whose loss-dependent term is $\order(\sqrt{K L^\star}\log T)$, where $T$ denotes the horizon, $L^\star$ denotes the smallest expected cumulative loss among fixed arms, and $K$ denotes the number of arms. However, a logarithmic gap remains between this loss-dependent term and the optimal first-order scale. In this paper, we remove this logarithmic factor by proposing an efficient and parameter-free algorithm that achieves the minimax optimal first-order regret guarantee $\order(\sqrt{K L^\star})$.
Technically, our algorithm is built upon FTRL with a Tsallis entropy regularizer~\citep{zimmert2021tsallis}, with a learning rate adapted to the learner's cumulative loss and a sampling distribution determined by a scalar fixed point. This construction controls the combined estimation and stability cost in terms of the learner's loss. Moreover, we show that the same algorithm achieves a best-of-both-worlds guarantee, with $\order(\sum_{i\ne i^\star}\frac{\log T}{\Delta_i})$ regret for i.i.d. losses with a unique best arm $i^\star$, where $\Delta_i$ denotes the suboptimality gap of arm $i$.
\end{abstract}

\section{Introduction}

We study first-order and best-of-both-worlds regret guarantees in multi-armed bandits (MAB).
In adversarial MAB, a learner has $K$ arms available and interacts with the environment over a horizon $T>2$ where the loss of each arm is arbitrarily decided. At each round, the learner chooses an arm and observes only the loss of the chosen arm. The objective is to minimize the learner's expected cumulative loss relative to the best fixed arm. While the minimax regret is $\Theta(\sqrt{KT})$~\citep{auer2002nonstochastic,audibert2009minimax}, this worst-case guarantee does not adapt to easier loss sequences.

Indeed, one can achieve a much better regret guarantee by exploiting structure in the loss sequence. For example, when the losses are drawn i.i.d. from a fixed distribution, algorithms can achieve logarithmic gap-dependent regret \citep{auer2002finite}. First-order bounds, also called small-loss bounds, provide another form of adaptation by replacing the leading dependence on the horizon with a dependence on the smallest expected cumulative loss $L^\star$ among fixed arms. When $L^\star\ll T$, such a guarantee can be substantially smaller than the minimax bound. In fact, multiple approaches achieve near-optimal first-order guarantees~\citep{allenberg2006hannan,neu2015first,wei2018adaptive,pogodin2020first,ito2021parameter}. Specifically, \citet{pogodin2020first} combine minimax-optimal worst-case regret with a first-order guarantee whose loss-dependent term is $\order(\sqrt{K L^\star}\log T)$, together with additional terms depending on $K$ and $\log T$.\footnote{\citet{pogodin2020first} also show that a better $\order(\sqrt{KL^\star\log T})$ guarantee can be obtained simultaneously with $\order(\sqrt{KT})$ when $T$ is known.} However, none of these guarantees removes the logarithmic factor from the leading first-order term, so these first-order bounds alone do not recover the minimax-optimal $\order(\sqrt{KT})$ regret when $L^\star$ is of order $T$, while the lower-bound construction of \citet{gerchinovitz2016lower} only shows an $\Omega(\sqrt{K L^\star})$ first-order regret lower bound.
Via a Bayesian analysis of Thompson sampling and the minimax theorem, \citet{bubeck2020first} show that some algorithm attains $\order(\sqrt{K\bar{L}}+K\log^2\bar{L})$ regret on every oblivious loss sequence with $L^\star\le\bar{L}$. However, this argument is non-constructive and requires $\bar{L}$ and $T$ in advance. This leaves open whether an explicit algorithm can achieve the minimax-optimal first-order guarantee without such prior knowledge. We therefore ask:
\begin{center}
    \emph{Can one design an efficient algorithm that achieves the minimax-optimal first-order guarantee in adversarial MAB? Moreover, can the same algorithm achieve a best-of-both-worlds guarantee?}
\end{center}

In this paper, we answer this question affirmatively by designing a single algorithm with leading first-order regret $\order(\sqrt{K L^\star})$ in the adversarial environment and $\order(\sum_{i\ne i^\star}\frac{\log T}{\Delta_i})$ pseudo-regret for i.i.d. losses with a unique best arm $i^\star$, where $\Delta_i$ denotes the mean-loss gap of arm $i$ from $i^\star$. Technically, we couple the sampling distribution of Tsallis FTRL~\citep{zimmert2021tsallis} with its implicit exploration estimator~\citep{kocak2014efficient,neu2015explore} through a scalar fixed point. This controls the combined cost of estimation bias and stability in terms of the learner's loss, and a regularization scale based on observed cumulative loss gives the optimal leading first-order term in \pref{sec:lf}. In \pref{sec:bobw}, a refined stability analysis bounds the optimal arm's contribution through suboptimal-arm probabilities, while the estimation bias cancels the gap-dependent part of the stability bound. The resulting regret bound depends on suboptimal-arm probabilities, allowing the same observed-loss scale and the self-bounding identity to yield logarithmic gap-dependent regret. To our knowledge, this is the first efficient algorithm that achieves the minimax optimal first-order regret guarantee for MAB without knowing $L^\star$ or $T$, and it also achieves a best-of-both-worlds guarantee.

\subsection{Related work}

First-order guarantees refine the worst-case bandit bounds of \citet{auer2002nonstochastic,audibert2009minimax} by replacing the horizon dependence with $L^\star$ \citep{allenberg2006hannan,neu2015first}. On the upper-bound side, GREEN \citep{allenberg2006hannan}, FPL-TRIX \citep{neu2015first}, log-barrier mirror descent with a learning rate tuned using $L^\star$ \citep{foster2016games}, and BROAD-OMD \citep{wei2018adaptive} achieve first-order guarantees with loss-dependent terms of $\otil(\sqrt{K L^\star})$.\footnote{The notation $\otil(\cdot)$ suppresses logarithmic factors in $K$ and $T$.} In particular, FPL-TRIX attains $\order(\sqrt{K L^\star\log K})$ without prior knowledge, but its worst-case bound $\order(\sqrt{KT\log K})$ is not minimax optimal, while the log-barrier methods above incur $\order(\sqrt{K L^\star\log T})$. The INF/log-barrier FTRL algorithm of \citet{pogodin2020first} combines minimax-optimal $\order(\sqrt{KT})$ worst-case regret with a first-order guarantee whose loss-dependent term is $\order(\sqrt{K L^\star}\log T)$, but a logarithmic factor remains in this term. First-order guarantees have also been studied in richer models with partial feedback, including bandits with feedback graphs \citep{lykouris2018small,lee2020graph}, contextual bandits \citep{allen2018make,foster2021efficient,olkhovskaya2023first,vanerven2025reduction}, linear bandits \citep{lee2020bias,ito2020linear}, and episodic Markov decision processes (MDPs) \citep{lee2020bias,li2026data}.

Best-of-both-worlds guarantees require a single algorithm to achieve $\otil(\sqrt{KT})$ adversarial pseudo-regret and (poly)logarithmic gap-dependent stochastic pseudo-regret without knowing the environment.
SAO initiated this direction \citep{bubeck2012best}, followed by EXP3++ \citep{seldin2014practical,seldin2017exp3pp} and SAPO \citep{auer2016nearly}.
Tsallis-INF achieves $\order(\sqrt{KT})$ adversarial pseudo-regret and $\order(\sum_{i\ne i^\star}\frac{\log T}{\Delta_i})$ stochastic pseudo-regret with an anytime learning rate \citep{zimmert2021tsallis}.
\citet{ito2021parameter} showed that Tsallis-INF retains logarithmic stochastic regret without a unique optimal arm, and \citet{jin2023improved} extended this to general regularizers.
Follow-the-perturbed-leader (FTPL) also achieves best-of-both-worlds guarantees for multi-armed bandits \citep{honda2023ftpl,lee2024frechet,chen2025geometric,lee2025unbounded,lee2026surrogate}.
This objective has been extended beyond standard MAB to combinatorial semi-bandits \citep{zimmert2019semibandit,chen2026semibandit}, bandits with delayed feedback \citep{masoudian2022delayed,schlisselberg2025delayed}, linear bandits, contextual bandits, and bandits with feedback graphs \citep{dann2023blackbox,ito2022graphs,ito2024spm,kato2025lctsallis}, as well as episodic MDPs \citep{jin2020mdp,jin2021mdp,ito2025aggregate,li2026data}.
Simultaneous first-order and best-of-both-worlds guarantees have been obtained using log-barrier OMD and adaptive FTRL \citep{wei2018adaptive,ito2021parameter,ito2022variance,nguyen2025realtime}.
Our result removes the logarithmic factor from the leading adversarial first-order term while achieving logarithmic gap-dependent stochastic regret.

\section{Preliminaries}

We study multi-armed bandits with $K\ge2$ arms over a horizon $T>2$.
Denote by $[K]\triangleq\{1,\ldots,K\}$ the set of arms and by $\mathcal F_{t-1}$ the $\sigma$-field generated by the learner's past actions, observed losses, and internal randomness before round $t$.
At round $t$, the environment selects a loss vector $\ell_t\in[0,1]^K$, and the learner selects an $\mathcal F_{t-1}$-measurable distribution $q_t$ over the arms.
The learner draws $I_t\sim q_t$, incurs loss $\ell_{t,I_t}$, and observes only this loss.

Our performance criterion is pseudo-regret against the best fixed arm in expectation.
The expectation includes learner and environment randomization.
Denote by $L_{T,k}\triangleq\E[\sum_{t=1}^T\ell_{t,k}]$ the expected cumulative loss of arm $k$ and by $L^\star\triangleq\min_{k\in[K]}L_{T,k}$ the smallest such loss.
For a fixed arm $k$, denote by $\Reg_{T,k}\triangleq\E[\sum_{t=1}^T\ell_{t,I_t}]-L_{T,k}$ the corresponding pseudo-regret.
The pseudo-regret against the best fixed arm is 
$$
 \Reg_T\triangleq\E\left[\sum_{t=1}^T\ell_{t,I_t}\right]-L^\star=\max_{k\in[K]}\Reg_{T,k}.
$$
For a deterministic loss sequence, $L^\star$ is the usual cumulative loss of the best arm in hindsight.

In the stochastic environment, the loss vectors are i.i.d. with means $\mu_i\triangleq\E[\ell_{t,i}]$ and a unique best arm $i^\star$.
Denote by $\Delta_i\triangleq\mu_i-\mu_{i^\star}>0$ the gap of arm $i\ne i^\star$.
The coordinates within a loss vector may be dependent.
Independence across rounds gives the identity
\begin{equation}
 \Reg_T=\sum_{i\ne i^\star}\Delta_i\E\left[\sum_{t=1}^Tq_{t,i}\right].
 \label{eq:selfbound}
\end{equation}
Our goal is a guarantee with the optimal leading first-order term $\order(\sqrt{KL^\star})$ in the adversarial environment, together with logarithmic gap-dependent regret in the stochastic environment.

\paragraph{Notation.}
Denote by $\simplex\triangleq\{x\in\mathbb R_+^K:\sum_i x_i=1\}$ the probability simplex and by $\operatorname{int}(\simplex)$ its relative interior, consisting of distributions with strictly positive coordinates.
Denote by $e_i$ the $i$-th standard basis vector, by $\one$ the all-ones vector, and by $\ind\{\cdot\}$ the indicator function.
For $u,v\in\mathbb R^K$, denote by $\ip{u}{v}\triangleq\sum_{i=1}^K u_iv_i$ the inner product.
Denote by $\Et[\cdot]\triangleq\E[\cdot\mid\mathcal F_{t-1},\ell_t]$ conditional expectation over the fresh draw at round $t$. For a differentiable convex function $\Phi$, denote by $D_\Phi(x,y)\triangleq\Phi(x)-\Phi(y)-\ip{\nabla\Phi(y)}{x-y}$ its Bregman divergence. 

\section{Minimax Optimal First-Order Regret via Fixed-Point Sampling}
\label{sec:lf}

In this section, we derive our minimax optimal first-order algorithm from FTRL with Tsallis entropy~\citep{zimmert2021tsallis}. In \pref{sec:loss-dep-rem}, we identify why the usual stability bound gives minimax regret but does not adapt to the learner's loss, and in \pref{sec:lf-sampling}, we derive a sampling distribution that controls estimation bias and stability together through a scalar fixed point, then choose the learning rate from observed cumulative loss.

\subsection{From FTRL with Tsallis Entropy to a Loss-Dependent Remainder}\label{sec:loss-dep-rem}

Our starting point is the $\frac{1}{2}$-Tsallis regularizer used by Tsallis-INF \citep{zimmert2021tsallis}. 
Denote by $\Psi(x)\triangleq-2\sum_i\sqrt{x_i}$ the negative $\frac{1}{2}$-Tsallis entropy regularizer. 
Fix a learning rate $\eta>0$, and denote by $\wh{L}_t\triangleq\sum_{s=1}^t\ellhat_s$ the cumulative loss estimate, with $\wh{L}_0=0$.
The FTRL distribution before round $t$ is 
$$
 x_t=\arg\min_{x\in\simplex}\left\{\ip{\wh{L}_{t-1}}x+\frac{1}{\eta}\Psi(x)\right\}.
$$
We first consider direct sampling $I_t\sim x_t$ with the unbiased estimate $\ellhat_{t,i}=\ell_{t,I_t}\ind\{I_t=i\}/x_{t,i}$.
The standard FTRL analysis \citep{zimmert2021tsallis} gives
\begin{equation}
 \sum_{t=1}^T\ip{x_t-e_k}{\ellhat_t}
 \le\frac{2\sqrt K}{\eta}+\eta\sum_{t=1}^T\sum_{i=1}^K x_{t,i}^{3/2}\ellhat_{t,i}^2.
 \label{eq:lf-ftrl-bound}
\end{equation}
Taking conditional expectations under direct sampling therefore yields
$$
 \Reg_{T,k}\le\order\left(\frac{\sqrt K}{\eta}
 +\eta\E\left[\sum_{t=1}^T\sum_{i=1}^K\sqrt{x_{t,i}}\,\ell_{t,i}^2\right]\right).
$$
Bounding the inner sum by $\sqrt{K}\cdot T$ gives $\order(\sqrt{KT})$ worst-case regret after picking $\eta=1/\sqrt{T}$.
A typical route to a first-order guarantee is to control the quadratic cost $\sum_{t=1}^T\sum_{i=1}^K\sqrt{x_{t,i}}\ell_{t,i}^2$ through the learner's loss $\sum_{t=1}^T\sum_{i=1}^Kx_{t,i}\ell_{t,i}$.
However, when $\ell_t=e_i$, the inner sum is $\sqrt{x_{t,i}}$, while the learner's expected loss is $x_{t,i}$. Their ratio is unbounded as $x_{t,i}$ approaches zero, so this term cannot be controlled uniformly by the learner's loss. 

To resolve this mismatch, we keep the FTRL update and change its sampling distribution from $x_t$ to $q_t$. We make two choices jointly: the loss estimator and the played distribution.
First, after drawing $I_t\sim q_t$, we use the implicit-exploration (IX) estimator \citep{kocak2014efficient,neu2015explore}, with positive offsets $\rho_{t,i}>0$ to be specified later:
\begin{equation}
 \ellhat_{t,i}
 =\frac{\ell_{t,I_t}\ind\{I_t=i\}}{q_{t,i}+\rho_{t,i}},
 \label{eq:lf-estimator}
\end{equation}
for every arm $i$.
The FTRL update still uses only $\wh{L}_t=\wh{L}_{t-1}+\ellhat_t$.
Under sampling from $q_t$, its conditional mean is $\Et[\ellhat_{t,i}]=q_{t,i}\ell_{t,i}/(q_{t,i}+\rho_{t,i})\le\ell_{t,i}$, so the estimator is biased downward. Its second moment is $\Et[\ellhat_{t,i}^2]=q_{t,i}\ell_{t,i}^2/(q_{t,i}+\rho_{t,i})^2$.
Second, we construct $q_t$ from $x_t$ so that the cost of this change and the quadratic term in \pref{eq:lf-ftrl-bound} can be controlled by the played loss.
Taking expectations in \pref{eq:lf-ftrl-bound} and using the fact that $\Et[\ellhat_{t,k}]\leq \ell_{t,k}$ for all $k\in[K]$ gives 
$$
 \E\left[\sum_{t=1}^T\left(\Et\left[\ip{x_t}{\ellhat_t}\right]-\eta\sum_{i=1}^K x_{t,i}^{3/2}\Et\left[\ellhat_{t,i}^2\right]\right)\right]
 \le L_{T,k}+\order\left(\frac{\sqrt K}{\eta}\right).
$$
Rearranging this inequality to bound $-L_{T,k}$ and adding $\E[\sum_t\ip{q_t}{\ell_t}]$ to both sides yields
\begin{equation}
 \Reg_{T,k}\le\order\left(\frac{\sqrt K}{\eta}\right)
 +\E\left[\sum_{t=1}^T\left(\ip{q_t}{\ell_t}-\Et\left[\ip{x_t}{\ellhat_t}\right]+\eta\sum_{i=1}^K x_{t,i}^{3/2}\Et\left[\ellhat_{t,i}^2\right]\right)\right].
 \label{eq:lf-design-remainder}
\end{equation}
Thus the two choices must control the combined remainder, including both estimation bias and the quadratic cost.
We aim to construct $q_t$ and an $\mathcal F_{t-1}$-measurable $Z_t\in(0,1)$ satisfying
\begin{align}
 \Et\left[\ip{x_t}{\ellhat_t}\right]-\eta\sum_{i=1}^K x_{t,i}^{3/2}\Et\left[\ellhat_{t,i}^2\right]
 &\ge Z_t\ip{q_t}{\ell_t}, \label{eq:lf-target}\\*
 1-Z_t&=\order(\eta\sqrt K). \nonumber
\end{align}
The first condition bounds the remainder in \pref{eq:lf-design-remainder} by $(1-Z_t)\ip{q_t}{\ell_t}$, and the second makes this quantity bounded by $\order(\eta\sqrt K)\ip{q_t}{\ell_t}$.
Using $\E[\sum_t\ip{q_t}{\ell_t}]=L_{T,k}+\Reg_{T,k}$ gives $\Reg_{T,k}\le\order(\sqrt K/\eta)+\order(\eta\sqrt K)(L_{T,k}+\Reg_{T,k})$. As long as $\eta\sqrt K$ is sufficiently small, rearranging the terms yields $\Reg_{T,k}=\order(\sqrt K/\eta+\eta\sqrt K L_{T,k})$. 
It remains to construct the offset $\rho_{t,i}$ and sampling distribution $q_t$ so that both conditions in \pref{eq:lf-target} hold without observing $\ell_t$.

\subsection{Deriving the Sampling Distribution}
\label{sec:lf-sampling}

We first leave $\rho_{t,i}$ unspecified and derive the conditions needed for \pref{eq:lf-target}.
Substituting the conditional moments of \pref{eq:lf-estimator} gives
\begin{align}
 \Et\left[\ip{x_t}{\ellhat_t}\right]-\eta\sum_{i=1}^K x_{t,i}^{3/2}\Et\left[\ellhat_{t,i}^2\right] \ge\sum_{i=1}^K q_{t,i}\ell_{t,i}
 \left(\frac{x_{t,i}}{q_{t,i}+\rho_{t,i}}-\frac{\eta x_{t,i}^{3/2}}{(q_{t,i}+\rho_{t,i})^2}\right),
 \label{eq:lf-net-loss}
\end{align}
where the inequality uses $\ell_{t,i}^2\le\ell_{t,i}$. Thus, to control the remainder by $(1-Z_t)\ip{q_t}{\ell_t}$ for every loss vector, it suffices to make
\begin{equation}
 \frac{x_{t,i}}{q_{t,i}+\rho_{t,i}}-\frac{\eta x_{t,i}^{3/2}}{(q_{t,i}+\rho_{t,i})^2}\ge Z_t,
 \label{eq:lf-coordinate-target}
\end{equation}
for every coordinate with $q_{t,i}>0$, while keeping $1-Z_t=\order(\eta\sqrt K)$.
This is the joint design requirement on the estimator offset and the sampling distribution.

To obtain a sampling rule $q_t$ from \pref{eq:lf-coordinate-target}, we aim to find a reciprocal lower bound on its coefficient.
Suppose that $q_{t,i}+\rho_{t,i}\ge2\eta\sqrt{x_{t,i}}$. The scalar inequality $1/d-a/d^2\ge1/(d+2a)$ for $d\ge2a>0$ therefore gives
\begin{equation}
 \frac{x_{t,i}}{q_{t,i}+\rho_{t,i}}-\frac{\eta x_{t,i}^{3/2}}{(q_{t,i}+\rho_{t,i})^2}
 \ge\frac{x_{t,i}}{q_{t,i}+\rho_{t,i}+2\eta\sqrt{x_{t,i}}}.
 \label{eq:lf-effective-loss}
\end{equation}
Therefore, an offset satisfying $\rho_{t,i}\ge2\eta\sqrt{x_{t,i}}$ makes this comparison valid uniformly over the sampling probabilities.
We impose this sufficient condition for now and choose the exact offset after checking the normalization cost.

To further match the right hand side of \pref{eq:lf-effective-loss} to $Z_t$, a positive coordinate of $q_t$ must satisfy $q_{t,i}=x_{t,i}/Z_t-\rho_{t,i}-2\eta\sqrt{x_{t,i}}$.
Denote by $[v]_+\triangleq\max\{v,0\}$ the positive part.
We therefore consider
\begin{equation}
 q_i(Z)=\left[\frac{x_{t,i}}Z-\rho_{t,i}-2\eta\sqrt{x_{t,i}}\right]_+,
 \label{eq:lf-generic-qroot}
\end{equation}
and determine $Z_t$ from $\sum_{i=1}^Kq_i(Z_t)=1$, with $q_{t,i}=q_i(Z_t)$. In order to ensure that $q_t\in\simplex$, it suffices to solve the following scalar fixed-point equation
\begin{equation}
 Z_t=\sum_{i=1}^K\left[x_{t,i}-Z_t\bigl(\rho_{t,i}+2\eta\sqrt{x_{t,i}}\bigr)\right]_+.
 \label{eq:lf-generic-fixedpoint}
\end{equation}
It couples the played probabilities to the estimator that uses those same probabilities in its denominator.
The fixed-point equation has a unique solution in $(0,1)$: $\sum_iq_i(Z)$ is continuous and strictly decreasing wherever positive, diverges as $Z$ approaches $0$, and is less than one at $Z=1$.

It remains to show that $1-Z_t$ satisfies the required bound in \pref{eq:lf-target}.
By \pref{eq:lf-generic-fixedpoint}, we have
\begin{align}
 1-Z_t&=\sum_{i=1}^K\left(x_{t,i}-\left[x_{t,i}-Z_t\bigl(\rho_{t,i}+2\eta\sqrt{x_{t,i}}\bigr)\right]_+\right) \nonumber\\*
 &=\sum_{i=1}^K\min\left\{x_{t,i},Z_t\bigl(\rho_{t,i}+2\eta\sqrt{x_{t,i}}\bigr)\right\}
 \le\sum_{i=1}^K\rho_{t,i}+2\eta\sqrt K,
 \label{eq:lf-generic-defect}
\end{align}
where the inequality uses $Z_t\le1$ and $\sum_i\sqrt{x_{t,i}}\le\sqrt K$.
It therefore suffices to choose offsets satisfying the sufficient coordinate condition in \pref{eq:lf-effective-loss} and $\sum_{i=1}^K\rho_{t,i}=\order(\eta\sqrt K)$ to control the normalization defect.
We now choose $\rho_{t,i}=2\eta\sqrt{x_{t,i}}$.
Its sum is at most $2\eta\sqrt K$, so it meets both requirements.
This choice simplifies \pref{eq:lf-generic-qroot} to $q_i(Z)=\left[\frac{x_{t,i}}Z-2\rho_{t,i}\right]_+$, and the fixed-point equation becomes $Z_t=\sum_{i=1}^K[x_{t,i}-2Z_t\rho_{t,i}]_+$. Recall that $q_{t,i}=q_i(Z_t)$ for all $i\in[K]$. If $q_{t,i}>0$, then $x_{t,i}=Z_t(q_{t,i}+2\rho_{t,i})$. Consequently, for every $i\in[K]$,
\begin{equation}
 \frac{x_{t,i}q_{t,i}}{q_{t,i}+2\rho_{t,i}}=Z_tq_{t,i},
 \label{eq:lf-matching}
\end{equation}
where the equality also holds trivially when $q_{t,i}=0$.
Combining \pref{eq:lf-net-loss}, \pref{eq:lf-effective-loss}, and \pref{eq:lf-matching} proves the first target in \pref{eq:lf-target}.
Moreover, 
$
 1-Z_t=\sum_{i=1}^K\min\left\{x_{t,i},2Z_t\rho_{t,i}\right\}
 \le2Z_t\sum_{i=1}^K\rho_{t,i}
 =\order(\eta\sqrt K)$,
which proves the second target in \pref{eq:lf-target} and also gives $Z_t\ge1/(1+4\eta\sqrt K)$. Picking $\eta=\Theta(\frac{1}{\sqrt{K+L^\star}})$ then gives the target leading first-order term $\order(\sqrt{KL^\star})$. To further achieve this target without knowing $L^\star$, we now use an adaptive learning rate choice
$\eta_t=\frac1{\sqrt{K+\sum_{s<t}\ell_{s,I_s}}}
$ to adapt the regularization scale to the learner's past observed losses. It turns out that this adaptive learning rate tuning also achieves the best-of-both-worlds guarantee as we will show in \pref{sec:bobw}.

Based on the above analysis, our formal algorithm is given in \pref{alg:lf}. At the beginning of round $t$, we compute $x_t$ by FTRL with Tsallis entropy and the adaptive learning rate (\pref{line:lf-ftrl}). 
The fixed-point calculation in \pref{line:lf-normalize} then determines the sampling distribution $q_t$ from $x_t$. Both $x_t$ and $Z_t$ involve solving one-dimensional monotone equations, in the simplex multiplier and in $Z$, respectively, so each round takes $\order(K\log(1/\epsilon))$ time by bisection to accuracy $\epsilon$. After sampling $I_t$ from $q_t$, we construct the IX-estimator $\ellhat_{t,i}$ with $\rho_{t,i}=2\eta_t\sqrt{x_{t,i}}$ and update the cumulative estimate. The following theorem gives the complete first-order guarantee.

\begin{samepage}\begin{theorem}
\phantomsection\label{thm:lf}
\pref{alg:lf}
guarantees that
$
 \Reg_{T,k}=\order(\sqrt{K L_{T,k}}+K),
$ for all $k\in[K]$.
\end{theorem}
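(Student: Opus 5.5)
The proof will redo the fixed-learning-rate argument of \pref{sec:loss-dep-rem} and \pref{sec:lf-sampling} with the time-varying rate $\eta_t=1/\sqrt{K+\sum_{s<t}\ell_{s,I_s}}$, which is non-increasing and $\mathcal F_{t-1}$-measurable. The first step is the anytime FTRL bound for Tsallis entropy with decreasing learning rates \citep{zimmert2021tsallis}. Using $\Psi(e_k)-\min_x\Psi(x)\le 2\sqrt K$ and a telescoping penalty term, it should give
\begin{equation*}
 \sum_{t=1}^T\ip{x_t-e_k}{\ellhat_t}\le \frac{2\sqrt K}{\eta_{T+1}}+\sum_{t=1}^T\eta_t\sum_i x_{t,i}^{3/2}\ellhat_{t,i}^2 .
\end{equation*}
The stability term requires the usual condition that the local-norm approximation is valid for nonnegative losses. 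For the $\tfrac12$-Tsallis regularizer and nonnegative estimates this holds without any restriction on $\ellhat$, since the Bregman stability bound only needs $\ellhat_t\ge0$.

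The second step is the per-round analysis. Because $\rho_{t,i}=2\eta_t\sqrt{x_{t,i}}$ and $Z_t$ are $\mathcal F_{t-1}$-measurable, the computation \pref{eq:lf-net-loss}--\pref{eq:lf-matching} applies verbatim with $\eta$ replaced by $\eta_t$. It yields
\begin{equation*}
 \Et[\ip{x_t}{\ellhat_t}]-\eta_t\sum_i x_{t,i}^{3/2}\Et[\ellhat_{t,i}^2]\ge Z_t\ip{q_t}{\ell_t},\qquad 1-Z_t\le 4\eta_t\sqrt K .
\end{equation*}
Taking expectations, using $\Et[\ellhat_{t,k}]\le\ell_{t,k}$ and $\E[\ip{q_t}{\ell_t}]=\E[\ell_{t,I_t}]$, and writing $\Lambda_T\triangleq\sum_{t\le T}\ell_{t,I_t}$, we get
\begin{equation*}
 \Reg_{T,k}\le 2\sqrt K\,\E\Bigl[\sqrt{K+\Lambda_T}\Bigr]+4\sqrt K\,\E\Bigl[\sum_{t=1}^T\frac{\ell_{t,I_t}}{\sqrt{K+\sum_{s<t}\ell_{s,I_s}}}\Bigr].
\end{equation*}
Here $\eta_{T+1}^{-1}\le\sqrt{K+\Lambda_T}$. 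In the second term, the conditional expectation of $\ell_{t,I_t}$ given $\mathcal F_{t-1},\ell_t$ is exactly $\ip{q_t}{\ell_t}$, and $\eta_t$ is predictable, so the replacement is exact.

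The third step is the standard summation. Since $\ell_{t,I_t}\le1\le K$, we have $K+\sum_{s<t}\ell_{s,I_s}\ge\tfrac12(K+\sum_{s\le t}\ell_{s,I_s})$. Then
\begin{equation*}
 \sum_t \frac{a_t}{\sqrt{K+A_{t-1}}}\le\sqrt2\sum_t\frac{a_t}{\sqrt{K+A_t}}\le 2\sqrt2\sqrt{K+\Lambda_T},
\end{equation*}
where $a_t=\ell_{t,I_t}$ and $A_t=\sum_{s\le t}a_s$. By Jensen's inequality, $\Reg_{T,k}\le c\sqrt K\sqrt{K+\E[\Lambda_T]}=c\sqrt K\sqrt{K+L_{T,k}+\Reg_{T,k}}$ for an absolute constant $c$. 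Solving this quadratic inequality in $R=\Reg_{T,k}$, namely $R^2\le c^2K(K+L_{T,k}+R)$ in the case $R\ge0$, gives $R\le c^2K+c\sqrt{K(K+L_{T,k})}=\order(\sqrt{KL_{T,k}}+K)$.

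The main obstacle I expect is the first step: justifying the FTRL stability term $\eta_t x_{t,i}^{3/2}\ellhat_{t,i}^2$ with time-varying $\eta_t$, and checking that the penalty telescopes to $\order(\sqrt K/\eta_{T+1})$. I would try the standard decomposition into a penalty term $\sum_t(\eta_{t+1}^{-1}-\eta_t^{-1})(\Psi(e_k)-\Psi(x_{t+1}))$ and a stability term $\eta_t^{-1}D$ between consecutive iterates. The stability term can be bounded by $\eta_t\sum_ix_{t,i}^{3/2}\ellhat_{t,i}^2$ using the fact that the $\tfrac12$-Tsallis conjugate step with nonnegative losses only decreases coordinates. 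A second point to check is that the estimator's large-value behavior causes no issue. IX denominators are at least $2\eta_t\sqrt{x_{t,i}}$, but the above argument does not even need this, because all inequalities are taken in conditional expectation, where the coordinatewise bound \pref{eq:lf-coordinate-target} already accounts for them.
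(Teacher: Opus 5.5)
Your proposal is correct and follows the paper's proof essentially step for step. The shared steps are the pathwise Tsallis FTRL decomposition, whose penalty telescopes to $2\sqrt K/\eta_{T+1}$ and whose stability term $\eta_t x_{t,I_t}^{3/2}\ellhat_{t,I_t}^2$ needs only $\ellhat_t\ge0$, the per-round bound $\Et[\ip{x_t}{\ellhat_t}]-\eta_t\sum_i x_{t,i}^{3/2}\Et[\ellhat_{t,i}^2]\ge Z_t\ip{q_t}{\ell_t}$ with $1-Z_t\le4\eta_t\sqrt K$, and Jensen plus self-bounding. The remaining differences are cosmetic: you sum $\sum_t\eta_t\ell_{t,I_t}$ via the standard bound $a_t/\sqrt{K+A_{t-1}}\le\sqrt2\,a_t/\sqrt{K+A_t}$ instead of the paper's identity $\ell_{t,I_t}=(\eta_{t+1}^{-1}-\eta_t^{-1})(\eta_{t+1}^{-1}+\eta_t^{-1})$, and you solve the quadratic directly rather than using AM--GM. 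Contrary to your last remark, the offset condition $q_{t,i}+\rho_{t,i}\ge2\eta_t\sqrt{x_{t,i}}$ is genuinely used in the per-round scalar inequality, though it holds automatically for $\rho_{t,i}=2\eta_t\sqrt{x_{t,i}}$.
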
\end{samepage}

To our knowledge, this is the first explicit and efficient algorithm with a minimax optimal first-order guarantee $\order(\sqrt{K L^\star})$ for MAB, improving upon the $\order(\min\{\sqrt{KT},\sqrt{K L^\star}\log T\})$ guarantee in \citet{pogodin2020first}.

\begin{algorithm2e}[t]
\caption{Follow-the-Regularized-Leader with Tsallis Entropy and Fixed-Point Sampling}
\label{alg:lf}
\DontPrintSemicolon
\LinesNumbered
\KwIn{number of arms $K\ge2$, $\Psi(x)=-2\sum_{i=1}^K\sqrt{x_i}$.}
Initialize $\wh{L}_0=0$\;
\For{$t=1,2,\ldots$}{
Compute $x_t=\arg\min_{x\in\simplex}\left\{\ip{x}{\wh{L}_{t-1}}+\frac{1}{\eta_t}\Psi(x)\right\}$ with $\eta_t=\frac{1}{\sqrt{K+\sum_{\tau<t}\ell_{\tau, I_\tau}}}$\nllabel{line:lf-ftrl}\;
 Define $\rho_{t,i}=2\eta_t\sqrt{x_{t,i}}$ for every $i$ and find $Z_t\in(0,1)$ such that $\sum_i[x_{t,i}/Z_t-2\rho_{t,i}]_+=1$\nllabel{line:lf-normalize}\;
 Set $q_{t,i}=[x_{t,i}/Z_t-2\rho_{t,i}]_+$ for every $i\in[K]$\nllabel{line:lf-sampling}\;
 Draw $I_t\sim q_t$ and observe $\ell_{t,I_t}$\nllabel{line:lf-sample}\;
 Construct $\ellhat_{t,i}=\frac{\ell_{t,I_t}\ind\{I_t=i\}}{q_{t,i}+\rho_{t,i}}$ for every $i\in[K]$\;
 Set $\wh{L}_t=\wh{L}_{t-1}+\ellhat_t$\nllabel{line:lf-update}\;
}
\end{algorithm2e}

\subsection{Proof of \pref{thm:lf}}
In this subsection, we provide a proof of \pref{thm:lf}. 

\begin{proofof}{\pref{thm:lf}}
Via a standard analysis of FTRL with Tsallis entropy (included in \pref{lem:lf-ftrl}), we know that 
\begin{equation}
 \resizebox{0.9\linewidth}{!}{$\displaystyle
 \sum_{t=1}^T\ip{x_t-u}{\ellhat_t}
 \le\frac{\Psi(u)-\Psi(x_1)}{\eta_1}
 +\sum_{t=1}^T\left(\frac{1}{\eta_{t+1}}-\frac{1}{\eta_t}\right)(\Psi(u)-\Psi(x_{t+1}))
 +\sum_{t=1}^T\eta_t\sum_{i=1}^K x_{t,i}^{3/2}\ellhat_{t,i}^2,
 $}
 \label{eq:tsallis-decomposition}
\end{equation}
for every $u\in\simplex$, where we define the last-round learning rate as $\eta_{T+1}=1/\sqrt{K+\sum_{t=1}^T\ell_{t,I_t}}$.
For $u=e_k$, the first two terms on the right-hand side of \pref{eq:tsallis-decomposition} satisfy
\begin{align*}
 &\frac{\Psi(e_k)-\Psi(x_1)}{\eta_1}
 +\sum_{t=1}^T\left(\frac{1}{\eta_{t+1}}-\frac{1}{\eta_t}\right)
 \bigl(\Psi(e_k)-\Psi(x_{t+1})\bigr)\\
 &\le 2\sqrt K\left(\frac{1}{\eta_1}
 +\sum_{t=1}^T\left(\frac{1}{\eta_{t+1}}-\frac{1}{\eta_t}\right)\right)
 =\frac{2\sqrt K}{\eta_{T+1}},
\end{align*}
where the inequality uses $\Psi(e_k)-\Psi(x)=2(\sum_{i=1}^K\sqrt{x_i}-1)\le2\sqrt K$ for every $x\in\simplex$, and $\eta_{t+1}^{-1}-\eta_t^{-1}\ge0$ because the observed losses are nonnegative. Taking expectations and using the fact that $\Et[\ellhat_{t,k}]\le\ell_{t,k}$ therefore gives
$$
 \E\left[\sum_{t=1}^T\bigl(\ip{x_t}{\ellhat_t}-\eta_t\sum_{i=1}^K x_{t,i}^{3/2}\ellhat_{t,i}^2\bigr)\right]
 \le L_{T,k}+2\sqrt K\E\left[\frac{1}{\eta_{T+1}}\right].
$$
According to the definition of $\ellhat_{t,i}$, we have $\E_t[\ellhat_{t,i}]=\frac{q_{t,i}\ell_{t,i}}{q_{t,i}+\rho_{t,i}}$ and $\E_t[\ellhat_{t,i}^2]=\frac{q_{t,i}\ell_{t,i}^2}{(q_{t,i}+\rho_{t,i})^2}$, which gives
\begin{align*}
 \Et\left[\ip{x_t}{\ellhat_t}\right]-\eta_t\sum_{i=1}^K x_{t,i}^{3/2}\Et\left[\ellhat_{t,i}^2\right]
 &\ge\sum_{i=1}^Kq_{t,i}\ell_{t,i}\left(\frac{x_{t,i}}{q_{t,i}+\rho_{t,i}}-\frac{\eta_tx_{t,i}^{3/2}}{(q_{t,i}+\rho_{t,i})^2}\right)\\
 &\ge\sum_{i=1}^K\frac{x_{t,i}q_{t,i}\ell_{t,i}}{q_{t,i}+2\rho_{t,i}}
 =Z_t\ip{q_t}{\ell_t},
\end{align*}
where the first inequality uses $\ell_{t,i}^2\le\ell_{t,i}$, the second follows from the scalar inequality in \pref{eq:lf-effective-loss} with $q_{t,i}+\rho_{t,i}\ge2\eta_t\sqrt{x_{t,i}}$, and the equality follows from the sampling rule in \pref{line:lf-sampling}.
Since $\Et[\ell_{t,I_t}]=\ip{q_t}{\ell_t}$, these inequalities imply $\E[\sum_t Z_t\ell_{t,I_t}]\le L_{T,k}+2\sqrt K\E[\eta_{T+1}^{-1}]$.
Moreover, the fixed-point calculation in \pref{line:lf-normalize} gives $Z_t=\sum_{i=1}^K[x_{t,i}-2Z_t\rho_{t,i}]_+$. Therefore, we have
\begin{align}
 1-Z_t
 &=\sum_{i=1}^K\left(x_{t,i}-[x_{t,i}-2Z_t\rho_{t,i}]_+\right)
 =\sum_{i=1}^K\min\{x_{t,i},2Z_t\rho_{t,i}\}\nonumber\\
 &\le2Z_t\sum_{i=1}^K\rho_{t,i}
 =4\eta_tZ_t\sum_{i=1}^K\sqrt{x_{t,i}}
 \le4\eta_t\sqrt K,
 \label{eq:lf-adaptive-defect}
\end{align}
where the last inequality uses $Z_t\le1$ and $\sum_{i=1}^K\sqrt{x_{t,i}}\le\sqrt K$.
Adding the remaining learner's loss $\E\left[\sum_{t=1}^T(1-Z_t)\ell_{t,I_t}\right]$ and applying \pref{eq:lf-adaptive-defect} yields
\begin{align*}
 \Reg_{T,k}
 &\le2\sqrt K\E\left[\frac{1}{\eta_{T+1}}\right]
 +\E\left[\sum_{t=1}^T(1-Z_t)\ell_{t,I_t}\right]\\
 &\le2\sqrt K\E\left[\frac{1}{\eta_{T+1}}\right]
 +4\sqrt K\E\left[\sum_{t=1}^T\eta_t\ell_{t,I_t}\right].
\end{align*}
The learning rate in \pref{line:lf-ftrl} gives $\ell_{t,I_t}=(\eta_{t+1}^{-1}-\eta_t^{-1})(\eta_{t+1}^{-1}+\eta_t^{-1})$.
Consequently, we obtain that 
\begin{align*}
 \sum_{t=1}^T\eta_t\ell_{t,I_t}
 =\sum_{t=1}^T\left(\frac{1}{\eta_{t+1}}-\frac{1}{\eta_t}\right)\left(1+\frac{\eta_t}{\eta_{t+1}}\right)\le3\sum_{t=1}^T\left(\frac{1}{\eta_{t+1}}-\frac{1}{\eta_t}\right)
 =3\left(\frac{1}{\eta_{T+1}}-\frac{1}{\eta_1}\right),
\end{align*}
where the inequality uses $\eta_t/\eta_{t+1}=\sqrt{1+\eta_t^2\ell_{t,I_t}}\le2$, since $\eta_t\le1$ and $\ell_{t,I_t}\le1$. Substituting this bound into the regret inequality gives
\begin{equation}
 \Reg_{T,k}\le\order\left(\sqrt K\E\left[\sqrt{K+\sum_{t=1}^T\ell_{t,I_t}}\right]\right). \label{eq:lfmaster}
\end{equation}
Jensen's inequality and $\E[\sum_t\ell_{t,I_t}]=L_{T,k}+\Reg_{T,k}$ imply $\Reg_{T,k}\le C\sqrt{K(K+L_{T,k}+\Reg_{T,k})}$ for some universal constant $C>0$.
If the regret is negative, \pref{thm:lf} follows immediately. Otherwise, applying the AM-GM inequality gives $C\sqrt{K\Reg_{T,k}}\le\Reg_{T,k}/2+C^2K/2$, which further gives $\Reg_{T,k}\le\Reg_{T,k}/2+\order(\sqrt{K L_{T,k}}+K)$. 
Rearranging the terms gives $\Reg_{T,k}\le\order(\sqrt{K L_{T,k}}+K)$, proving \pref{thm:lf}. 
\end{proofof}

\section{Logarithmic Regret in the Stochastic Environment}
\label{sec:bobw}

In this section, we prove that the same algorithm in \pref{alg:lf} also achieves logarithmic gap-dependent regret in the stochastic environment. Together with \pref{thm:lf}, this gives a best-of-both-worlds guarantee with the optimal leading first-order term in the adversarial environment. Formally, we have the following theorem.

\begin{samepage}\begin{theorem}
\phantomsection\label{thm:bobw}
For $K\ge2$ and $T>2$, suppose that the loss vectors in $[0,1]^K$ are i.i.d. with a unique best arm $i^\star$. 
\pref{alg:lf} guarantees that
$$
 \Reg_T=\order\left(\sum_{\substack{i=1,i\ne i^\star}}^K\frac{\log T}{\Delta_i}\right).
$$
\end{theorem}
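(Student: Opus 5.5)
We follow the self-bounding route of Tsallis-INF \citep{zimmert2021tsallis}. The goal is an adversarial-type bound whose terms are controlled by the suboptimal-arm mass $\sum_{i\ne i^\star}q_{t,i}$ (or $\sum_{i\ne i^\star}\sqrt{x_{t,i}}$). We then combine it with the identity \pref{eq:selfbound} and optimize. We work with the comparator $u=e_{i^\star}$ in \pref{eq:tsallis-decomposition}.

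\textbf{Penalty term.} We replace the crude bound $\Psi(e_{i^\star})-\Psi(x_{t+1})\le 2\sqrt K$ by
$$2\Bigl(\sum_i\sqrt{x_{t+1,i}}-1\Bigr)\le 2\sum_{i\ne i^\star}\sqrt{x_{t+1,i}},$$
using $\sqrt{x_{i^\star}}\le1$. Since $\eta_{t+1}^{-1}-\eta_t^{-1}\le \eta_t\ell_{t,I_t}/2\le \eta_t/2$, the penalty contributes $\order(\sum_t\eta_t\sum_{i\ne i^\star}\sqrt{x_{t+1,i}})$, plus $\order(\sqrt K/\eta_1)=\order(K)$. We then relate $x_{t+1}$ to $x_t$ by a standard multiplicative-stability argument: one IX update changes each coordinate by at most a constant factor, because $\eta_t\ellhat_{t,i}\sqrt{x_{t,i}}\le \eta_t\sqrt{x_{t,i}}/\rho_{t,i}=1/2$.

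\textbf{Stability and bias remainder.} Here the proof of \pref{thm:lf} is too lossy, since it charges $(1-Z_t)\ip{q_t}{\ell_t}$ and $1-Z_t$ involves $\sqrt{x_{t,i^\star}}\approx1$. The refined step is to keep the per-arm remainder
$$\ip{q_t}{\ell_t}-\Et\ip{x_t}{\ellhat_t}+\eta_t\sum_i x_{t,i}^{3/2}\Et[\ellhat_{t,i}^2].$$
We write it as $\sum_i \ell_{t,i}\,c_{t,i}$, where by \pref{eq:lf-net-loss}--\pref{eq:lf-matching} each $c_{t,i}$ equals $q_{t,i}(1-Z_t)$ up to a nonpositive slack. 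For $i\ne i^\star$ we bound $c_{t,i}$ by $\order(\eta_t\sqrt{x_{t,i}})$ after using $q_{t,i}\le x_{t,i}/Z_t$. For $i^\star$ we use the fixed point: $1-Z_t=\sum_i\min\{x_{t,i},2Z_t\rho_{t,i}\}$, whose $i^\star$ summand is $4\eta_tZ_t\sqrt{x_{t,i^\star}}$. The key cancellation is that the $i^\star$ contribution $q_{t,i^\star}\ell_{t,i^\star}(1-Z_t)$ is offset by the downward bias on the comparator. Instead of discarding it via $\Et[\ellhat_{t,i^\star}]\le\ell_{t,i^\star}$, we keep the exact term $\E\sum_t\ell_{t,i^\star}(1-\frac{q_{t,i^\star}}{q_{t,i^\star}+\rho_{t,i^\star}})$ on the left side. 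In the stochastic setting $\ell_{t,i^\star}$ has mean $\mu_{i^\star}$, which is at most every $\mu_i$. This lets the arm-$i^\star$ loss be dominated coordinatewise, so that after cancellation only gap-weighted or $\sum_{i\ne i^\star}$ terms survive. Concretely, we aim for
$$\Reg_T\le \order(K)+\order\Bigl(\E\sum_t\eta_t\sum_{i\ne i^\star}\sqrt{x_{t,i}}\Bigr),\qquad 1-x_{t,i^\star}=\sum_{i\ne i^\star}x_{t,i}.$$
We also need $x_{t,i}\lesssim q_{t,i}+\rho_{t,i}$ to translate between $x$ and $q$. This follows from \pref{eq:lf-matching} whenever $q_{t,i}>0$; when $q_{t,i}=0$, we have $x_{t,i}\le 4Z_t\eta_t\sqrt{x_{t,i}}$, so $x_{t,i}\le16\eta_t^2$, which is negligible. \textbf{The main obstacle is this cancellation at $i^\star$.} It must exploit i.i.d. losses with mean comparisons, not pointwise losses, because $\eta_t$ and $x_t$ are $\mathcal F_{t-1}$-measurable and independent of $\ell_t$.

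\textbf{Self-bounding.} Since $\eta_t\le 1/\sqrt{K+\sum_{s<t}\ell_{s,I_s}}$ and the losses need not be large, we instead lower bound the learner's cumulative loss. By Cauchy--Schwarz,
$$\E\sum_t\eta_t\sum_{i\ne i^\star}\sqrt{x_{t,i}}\le \sqrt{K\,\E\sum_t\eta_t^2}\,\sqrt{\E\sum_t\sum_{i\ne i^\star}x_{t,i}}.$$
We control $\sum_t\eta_t^2$ through $\eta_t^{-2}\ge K+\sum_{s<t}\ell_{s,I_s}$ and $\E\sum_{s<t}\ell_{s,I_s}\ge (t-1)\mu_{i^\star}$. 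This yields $\sum_t\eta_t^2=\order(\log T/\max\{\mu_{i^\star},1/T\})$ up to the $\ell_{t,I_t}$ weighting; more cleanly, we pair the per-round term with its weight $\ell_{t,i}$ and use the telescoping identity $\eta_t^2\ell_{t,I_t}\lesssim\log(\eta_t/\eta_{t+1})$, which gives $\order(\log T)$. Writing $P_i=\E\sum_tq_{t,i}$, we obtain
$$\Reg_T\le \order(K)+\order\Bigl(\sum_{i\ne i^\star}\sqrt{P_i\log T}\Bigr).$$
Finally, with $\Reg_T=\sum_{i\ne i^\star}\Delta_iP_i$ from \pref{eq:selfbound}, we write $\Reg_T=2\Reg_T-\Reg_T$ and maximize $c\sqrt{P_i\log T}-\Delta_iP_i$ over $P_i$. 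This gives $\order(\log T/\Delta_i)$ per arm and proves the claim.
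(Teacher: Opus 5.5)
Your plan breaks at the step you yourself flag as the main obstacle. With the stability bound you start from (\pref{eq:tsallis-decomposition}, i.e.\ the local-norm term $\eta_t\sum_i x_{t,i}^{3/2}\ellhat_{t,i}^2$), the cancellation at $i^\star$ does not occur.

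The comparator's IX bias $\mu_{i^\star}\rho_{t,i^\star}/(q_{t,i^\star}+\rho_{t,i^\star})$ does cancel the learner's \emph{bias} on arm $i^\star$. What remains is $-\mu_{i^\star}(1-x_{t,i^\star})\rho_{t,i^\star}/(q_{t,i^\star}+\rho_{t,i^\star})$, exactly as in \pref{eq:bobw-inter-3}. But nothing cancels arm $i^\star$'s own \emph{stability} term $\eta_tx_{t,i^\star}^{3/2}\Et[\ellhat_{t,i^\star}^2]$. In your bookkeeping, \pref{eq:lf-matching} gives exactly
\[
c_{t,i}=(1-Z_t)\,q_{t,i}-\frac{Z_t\,\rho_{t,i}\,q_{t,i}^2}{2(q_{t,i}+\rho_{t,i})^2}.
\]
This is exact after $\ell_{t,i}^2\le\ell_{t,i}$, which is tight for Bernoulli losses. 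Consider the regime that matters, $x_{t,i^\star}\to1$. Then $q_{t,i^\star}\to1$ and $\rho_{t,i^\star}\approx2\eta_t$. The $i^\star$-summand of $1-Z_t$ is $\approx4\eta_t$ and the other summands vanish, so $c_{t,i^\star}\approx3\eta_t$. The comparator bias, however, is only $\approx2\eta_t$.

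The surplus $\approx\eta_t\mu_{i^\star}$ per round multiplies $\mu_{i^\star}$ itself, not a gap. So no comparison $\mu_{i^\star}\le\mu_i$ removes it. The negative slack of the suboptimal arms is at most $\eta_t\sum_{i\ne i^\star}\sqrt{x_{t,i}}\to0$, so it cannot offset it either. Your bound therefore contains $\sum_t\eta_t\mu_{i^\star}$, which is of order $\sqrt{\mu_{i^\star}T}$. Hence the target $\Reg_T\le\order(K)+\order(\E\sum_t\eta_t\sum_{i\ne i^\star}\sqrt{x_{t,i}})$ does not follow.

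The missing ingredient is a refined stability bound for rounds in which the played arm is the largest coordinate of $x_t$. \pref{lem:lf-ftrl} replaces $x_{t,I_t}^{3/2}$ by $\min\{x_{t,I_t}^{3/2},8\sum_{i\ne I_t}x_{t,i}^{3/2}\}$. The argument goes through the KKT conditions of the stability maximizer:
\begin{itemize}
\item the multiplier satisfies $0\le\lambda\le\ellhat_{t,I_t}$;
\item the offset $\rho_{t,I_t}=2\eta_t\sqrt{x_{t,I_t}}$ ensures $\eta_t\lambda\sqrt{x_{t,i}}\le1/2$ whenever $x_{t,i}\le x_{t,I_t}$;
\item hence the mass transferred to the other arms is at most $8\eta_t\lambda\sum_{i\ne I_t}x_{t,i}^{3/2}$.
\end{itemize}
When $i^\star$ is the argmax, you also have $q_{t,i^\star}\ge x_{t,i^\star}$, because $q_{t,i}/x_{t,i}=[1/Z_t-4\eta_t/\sqrt{x_{t,i}}]_+$ is nondecreasing in $x_{t,i}$. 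Together these turn arm $i^\star$'s stability into $8\eta_t\mu_{i^\star}\sum_{i\ne i^\star}\sqrt{x_{t,i}}$. When $i^\star$ is not the argmax, its stability $2\eta_t\mu_{i^\star}\sqrt{x_{t,i^\star}}$ is already of this form.

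Your remaining steps then go through, with one further repair. Every surviving term must carry a weight dominated by $\ip{q_t}{\mu}=\E[\ell_{t,I_t}\mid\mathcal F_{t-1}]$, so that $\sum_t\eta_t^2\ell_{t,I_t}\le2\log T$ applies. The arm-specific weight $\mu_i$ for $i\ne i^\star$ is not so dominated: it exceeds $\ip{q_t}{\mu}$ once $q_t$ concentrates on $i^\star$. The paper handles this by splitting $\mu_i=\mu_{i^\star}+\Delta_i$. The $\mu_{i^\star}$ parts have admissible weight. The term $(1-Z_t)\sum_{i\ne i^\star}\Delta_iq_{t,i}$ is absorbed into $\Reg_T/2$ via $1-Z_t\le1/2$ once $\eta_t\le1/(8\sqrt K)$, and the earlier rounds cost $\order(K)$.
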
\end{samepage}
To prove \pref{thm:bobw}, we follow the approach of \citet{zimmert2021tsallis} and derive a regret bound in terms of suboptimal-arm probabilities, which allows us to apply the self-bounding identity in \pref{eq:selfbound}. Starting from the FTRL bound in \pref{lem:lf-ftrl}, we account for the estimation bias and the difference between the FTRL update $x_t$ and the decision $q_t$ to obtain the following lemma.

\begin{lemma}
\phantomsection\label{lem:bobw-joint-cost}
Under the assumptions of \pref{thm:bobw}, \pref{alg:lf} guarantees that
\begin{align}
 \Reg_T
 &\le\frac{\Psi(e_{i^\star})-\Psi(x_1)}{\eta_1}+\E\left[\sum_{t=1}^T\left(\frac{1}{\eta_{t+1}}-\frac{1}{\eta_t}\right)(\Psi(e_{i^\star})-\Psi(x_{t+1}))\right]\nonumber\\
 &\quad+\sum_{t=1}^T\E\left[(1-Z_t)\sum_{\substack{i=1\\i\ne i^\star}}^K\Delta_iq_{t,i}\right]+14\sum_{t=1}^T\E\left[\eta_t\mu_{i^\star}\sum_{\substack{i=1\\i\ne i^\star}}^K\sqrt{x_{t,i}}\right].
 \label{eq:bobw-joint-cost}
\end{align}
\end{lemma}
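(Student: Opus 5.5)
The plan is to redo the per-round argument from the proof of \pref{thm:lf} with comparator $u=e_{i^\star}$, but \emph{without} discarding the estimation bias of the comparator coordinate. The two losses in the proof of \pref{thm:lf} were the use of $\Et[\ellhat_{t,i^\star}]\le\ell_{t,i^\star}$ and the crude bound $1-Z_t\le 4\eta_t\sqrt K$.

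\textbf{Step 1: reduction to a per-round remainder.} Apply \pref{eq:tsallis-decomposition} with $u=e_{i^\star}$ and keep its first two terms exactly as they appear in \pref{eq:bobw-joint-cost}. Since $\ell_t$ is independent of $\mathcal F_{t-1}$, all conditional expectations of $\ell_{t,i}$ may be replaced by $\mu_i$. We have $\Reg_T=\E[\sum_t(\ip{q_t}{\mu}-\mu_{i^\star})]$ and $\Et[\ellhat_{t,i^\star}]=\mu_{i^\star}-\mu_{i^\star}\rho_{t,i^\star}/(q_{t,i^\star}+\rho_{t,i^\star})$. Hence it suffices to bound the per-round remainder
\[
R_t=\ip{q_t}{\mu}-\sum_{i=1}^K\frac{x_{t,i}q_{t,i}\mu_i}{q_{t,i}+\rho_{t,i}}+\eta_t\sum_{i=1}^K\frac{x_{t,i}^{3/2}q_{t,i}\mu_i}{(q_{t,i}+\rho_{t,i})^2}-\frac{\mu_{i^\star}\rho_{t,i^\star}}{q_{t,i^\star}+\rho_{t,i^\star}}.
\]
This uses $\Et[\ellhat_{t,i}^2]\le \Et[\ellhat_{t,i}]$-type moments with $\ell^2\le\ell$ and the independence of $\ell_{t,i}$ from $q_t$. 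The goal is
\[
R_t\le(1-Z_t)\sum_{i\ne i^\star}\Delta_iq_{t,i}+14\eta_t\mu_{i^\star}\sum_{i\ne i^\star}\sqrt{x_{t,i}}.
\]

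\textbf{Step 2: gap part versus baseline part.} Write $\mu_i=\mu_{i^\star}+\Delta_i$ for $i\ne i^\star$. The $\Delta_i$-part equals $\sum_{i\ne i^\star}\Delta_i q_{t,i}(1-c_{t,i})$, where $c_{t,i}$ is the coefficient in \pref{eq:lf-coordinate-target}. By \pref{eq:lf-effective-loss} and \pref{eq:lf-matching}, $c_{t,i}\ge Z_t$, which gives exactly the $(1-Z_t)\sum_{i\ne i^\star}\Delta_iq_{t,i}$ term. What remains is $\mu_{i^\star}$ times the bracket
\[
B_t=1-\sum_i q_{t,i}c_{t,i}-\frac{\rho_{t,i^\star}}{q_{t,i^\star}+\rho_{t,i^\star}},
\]
where I used $\sum_iq_{t,i}=1$. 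This is the step where the comparator's bias must cancel what would otherwise be an $\order(\eta_t)$ contribution of the near-deterministic arm $i^\star$.

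\textbf{Step 3 (main obstacle): showing $B_t\le 14\eta_t\sum_{i\ne i^\star}\sqrt{x_{t,i}}$.} For $i\ne i^\star$, I would use the crude bound $q_{t,i}(1-c_{t,i})\le q_{t,i}(1-Z_t)$. For $i^\star$, I would avoid the inequality $c\ge Z$ and instead compute exactly, using the fixed-point relation $x_{t,i^\star}=Z_t(q_{t,i^\star}+2\rho_{t,i^\star})$ (valid since $q_{t,i^\star}>0$ once $x_{t,i^\star}$ is large; the case $x_{t,i^\star}\le 1/2$ is trivial because then $\sum_{i\ne i^\star}\sqrt{x_{t,i}}\ge1/\sqrt2$ and $B_t\le 1-Z_t+1$-type bounds suffice). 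I would also use the exact defect formula
\[
1-Z_t=\sum_i\min\{x_{t,i},2Z_t\rho_{t,i}\}.
\]
This splits as $2Z_t\rho_{t,i^\star}$ plus a term at most $4\eta_t\sum_{i\ne i^\star}\sqrt{x_{t,i}}$.

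The key cancellations are then the following. The $+\eta_t$-order stability of $i^\star$ and the $\rho_{t,i^\star}$-order part of $1-Z_t$ should be offset by $-\rho_{t,i^\star}/(q_{t,i^\star}+\rho_{t,i^\star})$ and by the factor $(1-x_{t,i^\star})$ that appears after combining the $x_{t,i^\star}q_{t,i^\star}/(q_{t,i^\star}+\rho_{t,i^\star})$ term with the comparator bias. Every surviving term should carry either a factor $1-x_{t,i^\star}=\sum_{i\ne i^\star}x_{t,i}\le\sum_{i\ne i^\star}\sqrt{x_{t,i}}$, or $1-\sqrt{x_{t,i^\star}}$, or an explicit $\rho_{t,i}$ with $i\ne i^\star$. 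Bounding each such term by a constant multiple of $\eta_t\sum_{i\ne i^\star}\sqrt{x_{t,i}}$, using $q_{t,i^\star}+\rho_{t,i^\star}\ge \rho_{t,i^\star}\ge 2\eta_t\sqrt{x_{t,i^\star}}$ and $Z_t\ge 1/(1+4\eta_t\sqrt K)$, and tallying the constants should give $14$.

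I expect this exact second-order bookkeeping of the $i^\star$ coordinate, verifying that no bare $\eta_t\mu_{i^\star}$ term survives, to be the delicate part of the argument. If the direct expansion does not close, my fallback is to Taylor-expand $c_{t,i^\star}$ around $q_{t,i^\star}=x_{t,i^\star}/Z_t-4\eta_t\sqrt{x_{t,i^\star}}$ and verify the cancellation to first order in $\eta_t$.
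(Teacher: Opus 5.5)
Your Step~1 starts from \pref{eq:tsallis-decomposition}, so the stability term you carry is $\eta_t\sum_{i}x_{t,i}^{3/2}\ellhat_{t,i}^2$ over \emph{all} arms, including $i^\star$. With that term, the inequality you target in Step~3 is false, not merely delicate. The comparator bias $-\mu_{i^\star}\rho_{t,i^\star}/(q_{t,i^\star}+\rho_{t,i^\star})$ cancels only the learner's own bias on $i^\star$. The stability of $i^\star$, namely $\eta_tx_{t,i^\star}^{3/2}q_{t,i^\star}/(q_{t,i^\star}+\rho_{t,i^\star})^2\approx\eta_t$ when $x_{t,i^\star}\approx1$, carries none of the factors $1-x_{t,i^\star}$, $1-\sqrt{x_{t,i^\star}}$, or $\rho_{t,i}$ with $i\ne i^\star$ that you are counting on.

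Concretely, take $K=2$, $i^\star=1$, and $x_t=(1-\epsilon,\epsilon)$ with $\epsilon$ small enough that $\sqrt\epsilon\le4\eta_tZ_t$. The fixed point then gives $q_t=e_1$ and $Z_t=(1-\epsilon)/(1+2\rho_{t,1})$, and your bracket is exactly
\[
B_t=\frac{\epsilon}{1+\rho_{t,1}}+\frac{\eta_t(1-\epsilon)^{3/2}}{(1+\rho_{t,1})^2}.
\]
As $\epsilon\to0$, the claimed bound $14\eta_t\sqrt\epsilon$ vanishes while $B_t\to\eta_t/(1+2\eta_t)^2$. The residual is therefore genuinely first order in $\eta_t$, and neither exact bookkeeping nor your Taylor-expansion fallback can remove it. Since $x_{t,i^\star}\to1$ is precisely the stochastic regime, the leftover sums to order $\mu_{i^\star}\sum_t\eta_t\approx\sqrt{\mu_{i^\star}T}$. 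So this route cannot give the lemma or \pref{thm:bobw}. A smaller point: in your case $x_{t,i^\star}\le1/2$, an $\order(1)$ bound, or even $1-Z_t\le4\eta_t\sqrt K$, is too weak. You need $1-Z_t\le4\eta_t\sum_i\sqrt{x_{t,i}}$ together with $1\le\sqrt2\sum_{i\ne i^\star}\sqrt{x_{t,i}}$.

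The missing ingredient is the refined local-norm bound of \pref{lem:lf-ftrl}, whose stability term is $\eta_t\min\{x_{t,I_t}^{3/2},8\sum_{i\ne I_t}x_{t,i}^{3/2}\}\ellhat_{t,I_t}^2$. Its second branch comes from the KKT system of the maximizer together with $\eta_t\lambda\sqrt{x_{t,i}}\le\frac12$, which uses $\rho_{t,I_t}=2\eta_t\sqrt{x_{t,I_t}}$ and maximality of $x_{t,I_t}$. For the draw $I_t=i^\star$ you then split into two cases. If $i^\star$ is not the largest coordinate of $x_t$, the first branch contributes, in conditional expectation, at most $2\eta_t\mu_{i^\star}\sqrt{x_{t,i^\star}}\le2\eta_t\mu_{i^\star}\sum_{i\ne i^\star}\sqrt{x_{t,i}}$. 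If it is the largest, use the second branch together with $q_{t,i^\star}\ge x_{t,i^\star}$; this holds because $q_{t,i}/x_{t,i}=[1/Z_t-4\eta_t/\sqrt{x_{t,i}}]_+$ is nondecreasing in $x_{t,i}$ and $\sum_iq_{t,i}=1$. That case contributes at most $8\eta_t\mu_{i^\star}\sum_{i\ne i^\star}\sqrt{x_{t,i}}$.

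With the $i^\star$ stability handled this way, the rest of your plan is sound. Your Step~2 use of $c_{t,i}\ge Z_t$ on the $\Delta_i$-part is a compact form of the paper's cancellation of the $Z_tq_{t,i}\rho_{t,i}\Delta_i/(q_{t,i}+\rho_{t,i})$ terms. The pure-bias part of your bracket equals exactly $\sum_{i\ne i^\star}\frac{x_{t,i}\rho_{t,i}}{q_{t,i}+\rho_{t,i}}-\frac{(1-x_{t,i^\star})\rho_{t,i^\star}}{q_{t,i^\star}+\rho_{t,i^\star}}\le4\eta_t\sum_{i\ne i^\star}\sqrt{x_{t,i}}$, which is the cancellation you anticipated. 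Adding the suboptimal arms' $2\eta_t\sqrt{x_{t,i}}$ stability (with coefficient $\mu_{i^\star}$) and the $8$ above gives the constant $14$. To land on that constant, keep these suboptimal-arm terms exact, as in the paper's signed-bias identity, rather than bounding $q_{t,i}(1-c_{t,i})\le q_{t,i}(1-Z_t)$, which is lossier.
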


We first show how \pref{lem:bobw-joint-cost} implies \pref{thm:bobw} and then sketch the proof of \pref{lem:bobw-joint-cost}. The detailed conditional-expectation calculations are given in \pref{app:bobw-joint-details}.

\begin{proofof}{\pref{thm:bobw}}
We first analyze the first two terms in \pref{eq:bobw-joint-cost}. Since $x_1$ is uniform and $\eta_1=1/\sqrt K$, direct calculation gives that
\begin{align*}
    \frac{\Psi(e_{i^\star})-\Psi(x_1)}{\eta_1} = \sqrt{K}\left(-2+2\sqrt{K}\right) \leq 2K.
\end{align*}
For the second term, we show in \pref{lem:bobw-coordinate} that $x_{t+1,i}\le6x_{t,i}$ for all $i\in[K]$. The learning rate in \pref{line:lf-ftrl} also satisfies $\eta_{t+1}\le\eta_t$ and
$
 \frac{1}{\eta_{t+1}}-\frac{1}{\eta_t}
 =\frac{\ell_{t,I_t}}{\frac{1}{\eta_{t+1}}+\frac{1}{\eta_t}}
 =\frac{\eta_t\eta_{t+1}\ell_{t,I_t}}{\eta_t+\eta_{t+1}}
 \le\frac{\eta_t\ell_{t,I_t}}{2}.
$
We therefore obtain
\begin{align*}
    &\left(\frac{1}{\eta_{t+1}}-\frac{1}{\eta_t}\right)\left(\Psi(e_{i^\star}) - \Psi(x_{t+1})\right)\\
    &=2\left(\frac{1}{\eta_{t+1}}-\frac{1}{\eta_t}\right)\left(\sum_{i=1}^K\sqrt{x_{t+1,i}}-1\right)\leq2\sqrt6\left(\frac{1}{\eta_{t+1}}-\frac{1}{\eta_t}\right)\sum_{\substack{i=1\\i\ne i^\star}}^K\sqrt{x_{t,i}}\le3\eta_t\ell_{t,I_t}\sum_{\substack{i=1\\i\ne i^\star}}^K\sqrt{x_{t,i}},
\end{align*}
where the first inequality uses $\sqrt{x_{t+1,i^\star}}\le1$ and \pref{lem:bobw-coordinate}, and the second uses $1/\eta_{t+1}-1/\eta_t\le\eta_t\ell_{t,I_t}/2$. Summing and taking expectations with $\E[\ell_{t,I_t}\mid\mathcal F_{t-1}]=\ip{q_t}\mu$ further gives
\begin{equation}
 \begin{aligned}
 &\frac{\Psi(e_{i^\star})-\Psi(x_1)}{\eta_1}+\E\left[\sum_{t=1}^T\left(\frac1{\eta_{t+1}}-\frac1{\eta_t}\right)(\Psi(e_{i^\star})-\Psi(x_{t+1}))\right]\\
 &\le2K+3\sum_{t=1}^T\E\left[\eta_t\ip{q_t}\mu\sum_{\substack{i=1\\i\ne i^\star}}^K\sqrt{x_{t,i}}\right].
 \end{aligned}
 \label{eq:bobw-local-penalty}
\end{equation}

For the third term of \pref{eq:bobw-joint-cost}, we consider two cases separately. When $\eta_t>1/(8\sqrt K)$, the learning-rate definition gives $\sum_{\tau<t}\ell_{\tau,I_\tau}<63K$. These rounds form an initial interval, and its final loss increment is at most one. Consequently, on every realization,
$
 \sum_{t=1}^T\ind\left\{\eta_t>\frac{1}{8\sqrt K}\right\}\ell_{t,I_t}\le63K+1.
$
The contribution of these rounds satisfies
\begin{align*}
 \sum_{t=1}^T\E\left[\ind\left\{\eta_t>\frac{1}{8\sqrt K}\right\}(1-Z_t)\sum_{\substack{i=1\\i\ne i^\star}}^K\Delta_iq_{t,i}\right] &\le\sum_{t=1}^T\E\left[\ind\left\{\eta_t>\frac{1}{8\sqrt K}\right\}\E[\ell_{t,I_t}\mid\mathcal F_{t-1}]\right]\\
 &=\E\left[\sum_{t=1}^T\ind\left\{\eta_t>\frac{1}{8\sqrt K}\right\}\ell_{t,I_t}\right]\le63K+1,
\end{align*}
where the first inequality uses $\sum_{i\ne i^\star}\Delta_iq_{t,i}\le\ip{q_t}\mu=\E[\ell_{t,I_t}\mid\mathcal F_{t-1}]$, the equality uses the tower property and $\mathcal F_{t-1}$-measurability of $\eta_t$, and the last inequality uses the preceding cumulative-loss bound. When $\eta_t\le1/(8\sqrt K)$, \pref{eq:lf-adaptive-defect} gives $1-Z_t\le4\eta_t\sqrt K\le1/2$. Thus
\begin{align*}
 \sum_{t=1}^T\E\left[\ind\left\{\eta_t\le\frac{1}{8\sqrt K}\right\}(1-Z_t)\sum_{\substack{i=1\\i\ne i^\star}}^K\Delta_iq_{t,i}\right]\le\frac12\sum_{t=1}^T\E\left[\ind\left\{\eta_t\le\frac{1}{8\sqrt K}\right\}\sum_{\substack{i=1\\i\ne i^\star}}^K\Delta_iq_{t,i}\right]
 \le\frac{\Reg_T}{2},
\end{align*}
where the last inequality follows from \pref{eq:selfbound} and the nonnegativity of the gaps. Adding the two cases yields
\begin{align}
 \sum_{t=1}^T\E\left[(1-Z_t)\sum_{\substack{i=1\\i\ne i^\star}}^K\Delta_iq_{t,i}\right]
 \le\frac{\Reg_T}{2}+63K+1.
 \label{eq:bobw-normalization-cost}
\end{align}
Substituting \pref{eq:bobw-local-penalty} and \pref{eq:bobw-normalization-cost} into \pref{eq:bobw-joint-cost}, and using $\mu_{i^\star}\le\ip{q_t}\mu$, gives
\begin{align}
 \Reg_T\le\frac{\Reg_T}{2}+\order(K)+17\sum_{t=1}^T\E\left[\eta_t\ip{q_t}\mu\sum_{\substack{i=1\\i\ne i^\star}}^K\sqrt{x_{t,i}}\right].
 \label{eq:bobw-x-regret}
\end{align}
To express this bound through the sampling probability $q_t$, note that according to the definition of $q_t$ in \pref{line:lf-sampling}, we have
\begin{align*}
 q_{t,i}
 =\left[\frac{x_{t,i}}{Z_t}-4\eta_t\sqrt{x_{t,i}}\right]_+
 \ge\frac{x_{t,i}}{Z_t}-4\eta_t\sqrt{x_{t,i}}
 \ge x_{t,i}-4\eta_t\sqrt{x_{t,i}},
\end{align*}
where the second inequality uses $Z_t\le1$. Completing the square gives $(\sqrt{x_{t,i}}-2\eta_t)^2\le q_{t,i}+4\eta_t^2$, and hence $\sqrt{x_{t,i}}\le2\eta_t+\sqrt{q_{t,i}+4\eta_t^2}\le\sqrt{q_{t,i}}+4\eta_t.$
Substituting this into \pref{eq:bobw-x-regret} and rearranging the terms yields
\begin{align}\label{eq:inter}
 \Reg_T
&\le\order(K)+34\sum_{t=1}^T\E\left[\eta_t\ip{q_t}\mu\sum_{\substack{i=1\\i\ne i^\star}}^K\sqrt{q_{t,i}}\right]
 +4\cdot34(K-1)\sum_{t=1}^T\E\left[\eta_t^2\ip{q_t}\mu\right].
\end{align}
To bound the last term in \pref{eq:inter}, direct calculation shows that
\begin{align*}
 \sum_{t=1}^T\eta_t^2\ell_{t,I_t}
 &\le2\sum_{t=1}^T\log\left(\frac{\eta_t^2}{\eta_{t+1}^2}\right)
 =2\log\left(\frac{\eta_1^2}{\eta_{T+1}^2}\right) \le2\log\left(1+\frac{T}{K}\right)\le2\log T,
\end{align*}
where the first inequality applies $u\le2\log(1+u)$ with $u=\eta_t^2\ell_{t,I_t}\le1/2$ and $1+u=\eta_t^2/\eta_{t+1}^2$, and the last two inequalities use $\sum_{t=1}^T\ell_{t,I_t}\le T$, $K\ge2$, and $T>2$.
Since $\eta_t$ is $\mathcal F_{t-1}$-measurable, we have
\begin{align}
 \sum_{t=1}^T\E\left[\eta_t^2\ip{q_t}\mu\right]
=\sum_{t=1}^T\E\left[\eta_t^2\E[\ell_{t,I_t}\mid\mathcal F_{t-1}]\right]=\E\left[\sum_{t=1}^T\eta_t^2\ell_{t,I_t}\right]\le2\log T.
 \label{eq:bobw-square-rate}
\end{align}
Applying \pref{eq:bobw-square-rate} to \pref{eq:inter} gives
\begin{align}
 \Reg_T\le\order(K\log T)+34\sum_{t=1}^T\sum_{\substack{i=1\\i\ne i^\star}}^K\E[\eta_t\ip{q_t}\mu\sqrt{q_{t,i}}].
 \label{eq:bobw-localized-regret}
\end{align}
For each $i\ne i^\star$, the AM-GM inequality gives
$
 34\eta_t\ip{q_t}\mu\sqrt{q_{t,i}}
 \le\frac{\Delta_iq_{t,i}}{2}+\frac{34^2\eta_t^2\ip{q_t}\mu^2}{2\Delta_i}
 \le\frac{\Delta_iq_{t,i}}{2}+\frac{34^2\eta_t^2\ip{q_t}\mu}{2\Delta_i}.
$
Summing these bounds and applying \pref{eq:selfbound} and \pref{eq:bobw-square-rate} yields
\begin{align*}
 34\sum_{t=1}^T\sum_{\substack{i=1\\i\ne i^\star}}^K\E[\eta_t\ip{q_t}\mu\sqrt{q_{t,i}}]
 \le\frac{\Reg_T}{2}+\frac{34^2}{2}\sum_{\substack{i=1\\i\ne i^\star}}^K\frac{1}{\Delta_i}\sum_{t=1}^T\E[\eta_t^2\ip{q_t}\mu]\le\frac{\Reg_T}{2}+34^2\sum_{\substack{i=1\\i\ne i^\star}}^K\frac{\log T}{\Delta_i}.
\end{align*}
Since $0<\Delta_i\le1$ and $K\ge2$, $\order(K\log T)$ is bounded by $\order(\sum_{i\neq i^\star}\frac{\log T}{\Delta_i})$. Substituting the preceding inequality into \pref{eq:bobw-localized-regret} therefore gives $\Reg_T\le\Reg_T/2+\order(\sum_{i\ne i^\star}\frac{\log T}{\Delta_i})$. Rearranging the terms proves \pref{thm:bobw}.
\end{proofof}

\subsection{Proof Sketch of \pref{lem:bobw-joint-cost}}\label{sec:lemma-bobw-sketch}
In this section, we provide a proof sketch of \pref{lem:bobw-joint-cost}, with the complete proof deferred to \pref{app:bobw-joint-details}. By adapting the FTRL analysis with Tsallis entropy to our loss estimator, we show in \pref{lem:lf-ftrl} that
\begin{align}
    \sum_{t=1}^T\ip{x_t-e_{i^\star}}{\ellhat_t}
 &\le\frac{\Psi(e_{i^\star})-\Psi(x_1)}{\eta_1}
 +\sum_{t=1}^T\left(\frac{1}{\eta_{t+1}}-\frac{1}{\eta_t}\right)(\Psi(e_{i^\star})-\Psi(x_{t+1}))\nonumber\\
 &\qquad+\sum_{t=1}^T\eta_t\min\left\{x_{t,I_t}^{3/2},8\sum_{i=1,i\neq I_t}^Kx_{t,i}^{3/2}\right\}\ellhat_{t,I_t}^2.\label{eq:bobw-inter}
\end{align}
Taking the conditional expectation of the final summand, with some calculation, one can obtain that
\begin{align}
 \E\left[\eta_t\min\left\{x_{t,I_t}^{3/2},8\sum_{i=1,i\ne I_t}^Kx_{t,i}^{3/2}\right\}\ellhat_{t,I_t}^2\mid\mathcal F_{t-1}\right]\le Z_t\sum_{i=1,i\ne i^\star}^K\frac{q_{t,i}\rho_{t,i}\Delta_i}{q_{t,i}+\rho_{t,i}}+10\eta_t\mu_{i^\star}\sum_{i=1,i\ne i^\star}^K\sqrt{x_{t,i}}.\label{eq:bobw-inter-2}
\end{align}

In order to transform the left hand side of \pref{eq:bobw-inter} to $\Reg_T$, we need to account for the estimator and the change from $x_t$ to $q_t$. Using \pref{eq:lf-matching} and $\mu_i=\mu_{i^\star}+\Delta_i$, one can show that
\begin{align}
 &\E\left[\ip{q_t-e_{i^\star}}{\ell_t}-\ip{x_t-e_{i^\star}}{\ellhat_t}\mid\mathcal F_{t-1}\right] \nonumber\\ 
 &=(1-Z_t)\sum_{\substack{i=1\\i\ne i^\star}}^K\Delta_iq_{t,i}+\mu_{i^\star}\left(\sum_{\substack{i=1\\i\ne i^\star}}^K\frac{x_{t,i}\rho_{t,i}}{q_{t,i}+\rho_{t,i}}-\frac{(1-x_{t,i^\star})\rho_{t,i^\star}}{q_{t,i^\star}+\rho_{t,i^\star}}\right)-Z_t\sum_{\substack{i=1\\i\ne i^\star}}^K\frac{q_{t,i}\rho_{t,i}\Delta_i}{q_{t,i}+\rho_{t,i}}.\label{eq:bobw-inter-3}
\end{align}

Combining \pref{eq:bobw-inter}, \pref{eq:bobw-inter-2}, and \pref{eq:bobw-inter-3}, noticing that the negative term \pref{eq:bobw-inter-3} cancels the first term in \pref{eq:bobw-inter-2}, we can bound $\Reg_T$ as follows:

\begin{align*}
    \Reg_T &= \sum_{t=1}^T\E\left[\ip{x_t-e_{i^\star}}{\ellhat_t}\right] + \sum_{t=1}^T\E\left[\ip{q_t-e_{i^\star}}{\ell_t}-\ip{x_t-e_{i^\star}}{\ellhat_t}\right]\\
    &\le \frac{\Psi(e_{i^\star})-\Psi(x_1)}{\eta_1}+\E\left[\sum_{t=1}^T\left(\frac{1}{\eta_{t+1}}-\frac{1}{\eta_t}\right)(\Psi(e_{i^\star})-\Psi(x_{t+1}))\right]\\
    &\qquad+10\sum_{t=1}^T\E\left[\eta_t\mu_{i^\star}\sum_{i=1,i\ne i^\star}^K\sqrt{x_{t,i}}\right]\\
    &\qquad+\sum_{t=1}^T\E\left[(1-Z_t)\sum_{\substack{i=1\\i\ne i^\star}}^K\Delta_iq_{t,i}+\mu_{i^\star}\left(\sum_{\substack{i=1\\i\ne i^\star}}^K\frac{x_{t,i}\rho_{t,i}}{q_{t,i}+\rho_{t,i}}-\frac{(1-x_{t,i^\star})\rho_{t,i^\star}}{q_{t,i^\star}+\rho_{t,i^\star}}\right)\right],
\end{align*}
Moreover, as $q_{t,i}=[\frac{x_{t,i}}{Z_t}-2\rho_{t,i}]_+\geq \frac{x_{t,i}}{Z_t}-2\rho_{t,i} $, we know that $$\frac{x_{t,i}\rho_{t,i}}{q_{t,i}+\rho_{t,i}}\le \frac{Z_t(q_{t,i}+2\rho_{t,i})\rho_{t,i}}{q_{t,i}+\rho_{t,i}}\leq  2Z_t\rho_{t,i}\le4\eta_t\sqrt{x_{t,i}},$$ since $Z_t\le1$ and $\rho_{t,i}=2\eta_t\sqrt{x_{t,i}}$. Applying the preceding bound and dropping the non-positive terms lead to \pref{eq:bobw-joint-cost}.

\section{Conclusion}

In this paper, we design an algorithm with the minimax-optimal leading first-order regret term $\order(\sqrt{K L^\star})$ in the adversarial environment. To our knowledge, this is the first efficient algorithm to achieve this minimax optimal first-order scale without a logarithmic factor. Moreover, the same algorithm simultaneously achieves $\order(\sum_{i\neq i^\star}\frac{\log T}{\Delta_i})$ regret in the stochastic environment with a unique best arm, giving a best-of-both-worlds guarantee. Technically, we build on FTRL with Tsallis entropy and use a scalar fixed point to couple sampling with implicit exploration. This coupling controls estimation bias and stability jointly, while a learning rate based on the learner's observed cumulative loss adapts to both environments. Future work includes extending this approach to richer partial-feedback models and obtaining high-probability guarantees.

\section*{AI Disclosure}
The authors used GPT-6 Astra to assist with exploring proof strategies and checking mathematical derivations. The authors also use Opus 5.5 and GPT-6 Astra for identifying relevant literature and polishing the manuscript. The authors directed the research, critically examined and revised the proposed arguments, and iteratively refined the algorithm, proofs, and exposition. All decisions about the results and the presentation were made by the authors, who are responsible for the correctness, originality, and attribution of all content.

\bibliographystyle{plainnat}
\bibliography{references}

\appendix

\section{Omitted Details in \pref{sec:lf}}
\label{app:lf-proof}

In this section, we prove the FTRL bound used in \pref{sec:lf} and \pref{sec:bobw}, which follows the analysis of FTRL with Tsallis entropy with an adaptation to our IX-estimator.
\begin{lemma}
\phantomsection\label{lem:lf-ftrl}
\pref{alg:lf} guarantees that
\begin{align*}
 \sum_{t=1}^T\ip{x_t-u}{\ellhat_t}
 &\le\frac{\Psi(u)-\Psi(x_1)}{\eta_1}
 +\sum_{t=1}^T\left(\frac{1}{\eta_{t+1}}-\frac{1}{\eta_t}\right)(\Psi(u)-\Psi(x_{t+1}))\\
 &\quad+\sum_{t=1}^T\eta_t\min\left\{x_{t,I_t}^{3/2},8\sum_{\substack{i=1\\i\ne I_t}}^Kx_{t,i}^{3/2}\right\}\ellhat_{t,I_t}^2,
\end{align*}
for every $u\in\simplex$.
\end{lemma}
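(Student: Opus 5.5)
The plan follows the standard FTRL analysis with time-varying learning rates. Then comes a local-norm stability bound for Tsallis entropy, specialized to a one-sparse nonnegative loss estimate.

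\emph{Step 1 (penalty/stability split).} Define $F_t(x)=\ip{\wh{L}_{t-1}}{x}+\eta_t^{-1}\Psi(x)$, so $x_t=\arg\min_{\simplex}F_t$. The usual be-the-leader telescoping decomposes $\sum_t\ip{x_t-u}{\ellhat_t}$ into two parts. The first is the penalty $\sum_t\bigl(F_{t+1}(x_{t+1})-F_t(x_t)\bigr)$ compared with $F_{T+1}(u)$. Since $1/\eta_{t+1}\ge1/\eta_t$ (observed losses are nonnegative), this rearranges exactly into $\frac{\Psi(u)-\Psi(x_1)}{\eta_1}+\sum_t(\eta_{t+1}^{-1}-\eta_t^{-1})(\Psi(u)-\Psi(x_{t+1}))$. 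The second is the stability sum $\sum_t\bigl(\ip{x_t}{\ellhat_t}+F_t(x_t)-F_t(x_{t+1})-\ip{x_{t+1}}{\ellhat_t}\bigr)$, where $F_t$ here means $F_t$ with $\ellhat_t$ added. The time-$(t+1)$ regularizer is handled by changing $\eta_t\to\eta_{t+1}$ inside the penalty term. Each stability summand is at most
\[
\max_{x\in\simplex}\Bigl\{\ip{x_t-x}{\ellhat_t}-\tfrac1{\eta_t}D_\Psi(x,x_t)\Bigr\}.
\]
Here we use the first-order optimality of $x_t$ for $F_t$ on the simplex, which holds because $x_t\in\operatorname{int}(\simplex)$.

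\emph{Step 2 (first branch of the min).} Drop the simplex constraint and maximize coordinatewise over $x\ge0$. Since $\ellhat_t=\ellhat_{t,I_t}e_{I_t}$ with $\ellhat_{t,I_t}\ge0$, only coordinate $I_t$ matters. We must bound $\sup_{y\ge0}\{(x_{t,I_t}-y)\ellhat - \eta_t^{-1}d(y,x_{t,I_t})\}$, where $d$ is the one-dimensional Bregman divergence of $-2\sqrt{\cdot}$. For nonnegative losses the optimizer satisfies $y\le x_{t,I_t}$, where the curvature $\frac12 y^{-3/2}\ge\frac12x_{t,I_t}^{-3/2}$. This gives the standard bound $\eta_t x_{t,I_t}^{3/2}\ellhat_{t,I_t}^2$, possibly up to the constant hidden in the convex-conjugate computation. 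That settles the first branch of the min.

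\emph{Step 3 (second branch, the main obstacle).} When $x_{t,I_t}$ is large, we must instead exploit the simplex constraint. Shift the loss by a constant: $\ip{x_t-x}{\ellhat_t}=\ip{x_t-x}{\ellhat_t-c\one}$ for every $x\in\simplex$. Choose $c=\ellhat_{t,I_t}$, so the shifted loss is $-\ellhat_{t,I_t}$ on every $i\neq I_t$ and $0$ on $I_t$. Negative losses break the one-sided curvature argument, so Step 2's argument has to be redone with the constraint retained. First, I would show that the maximizer $x$ has $x_i\le 4x_{t,i}$ (or a similar constant multiple) for $i\ne I_t$. The argument: the optimality condition gives $x_i^{-1/2}=x_{t,i}^{-1/2}-\eta_t(\ellhat_{t,i}-c)+\lambda$ with a multiplier $\lambda$ that, by a monotonicity argument using $\sum_i x_i=1$, pushes mass back. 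Equivalently, consider the unconstrained maximization over $x_i\ge0$ with shifted loss $-\ellhat$, restricted to the region where $\eta_t\ellhat\sqrt{x_{t,i}}\le1/2$. On that region curvature at $x_i\le4x_{t,i}$ is at least $\frac12(4x_{t,i})^{-3/2}$, giving the summand $8\eta_t x_{t,i}^{3/2}\ellhat^2$ for each $i\ne I_t$. Summing yields $8\eta_t\sum_{i\ne I_t}x_{t,i}^{3/2}\ellhat_{t,I_t}^2$.

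The hard part is that this region condition need not hold automatically. I would try to verify it from the IX offset: $\ellhat_{t,I_t}\le1/\rho_{t,I_t}=1/(2\eta_t\sqrt{x_{t,I_t}})$, so $\eta_t\ellhat\sqrt{x_{t,I_t}}\le1/2$. I would then use the fact that in this branch only the case $\sum_{i\ne I_t}x_{t,i}^{3/2}\lesssim x_{t,I_t}^{3/2}$ matters, which forces $x_{t,i}\le x_{t,I_t}$, hence $\eta_t\ellhat\sqrt{x_{t,i}}\le1/2$ as well. This is precisely where the choice $\rho_{t,i}=2\eta_t\sqrt{x_{t,i}}$ enters, and the constant $8$ should come out of the factor-$4$ ratio bound. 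Taking the smaller of the two branch bounds and summing over $t$ completes the lemma.
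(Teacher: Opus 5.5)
Your proposal is correct. Step 1 and the first branch are the paper's argument; the one-sided curvature bound $D_{\eta_t^{-1}\Psi}(z,x_t)\ge (x_{t,I_t}-z_{I_t})^2/(4\eta_t x_{t,I_t}^{3/2})$ gives the constant exactly $1$, so no hedging is needed there.

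The second branch is where your route differs. The paper splits on whether $I_t$ is the argmax of $x_t$. In the argmax case it works with the constrained maximizer $z_t$ itself. It drops the Bregman term to bound the maximum by the transferred mass $\ellhat_{t,I_t}\sum_{i\ne I_t}(z_{t,i}-x_{t,i})$, shows the simplex multiplier lies in $[0,\ellhat_{t,I_t}]$, and reads off $z_{t,i}=x_{t,i}/(1-\eta_t\lambda\sqrt{x_{t,i}})^2$.

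You instead split on which branch of the min is active. You replace $\ellhat_t$ by $\ellhat_t-\ellhat_{t,I_t}\one$, which is legitimate on $\simplex$, and relax to $\mathbb{R}_+^K$ so the problem separates. You then bound each coordinate $i\ne I_t$ by local curvature on $[x_{t,i},4x_{t,i}]$. This is sound:
\begin{itemize}
\item When the second branch is active, $8\sum_{i\ne I_t}x_{t,i}^{3/2}<x_{t,I_t}^{3/2}$ forces $x_{t,i}<x_{t,I_t}$.
\item Hence $\eta_t\ellhat_{t,I_t}\sqrt{x_{t,i}}\le\eta_t\sqrt{x_{t,i}}/\rho_{t,I_t}<1/2$.
\item This keeps the relaxed supremum finite; it is $+\infty$ once this quantity reaches $1$.
\item It also yields exactly the constant $8$. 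Evaluating the one-dimensional supremum exactly even gives $\eta_t x_{t,i}^{3/2}\ellhat_{t,I_t}^2/(1-\eta_t\ellhat_{t,I_t}\sqrt{x_{t,i}})\le 2\eta_t x_{t,i}^{3/2}\ellhat_{t,I_t}^2$.
\end{itemize}

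Both routes rest on the same input: $\ellhat_{t,I_t}\le 1/\rho_{t,I_t}$ combined with $x_{t,i}\le x_{t,I_t}$. Yours is a modular loss-shifting plus local-norm argument that never needs to characterize the constrained maximizer or its multiplier. The paper's gives an explicit description of how much mass moves.

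Two small corrections:
\begin{itemize}
\item The unconstrained relaxation is an upper bound on the simplex maximum, not an equivalent problem.
\item Your KKT display has the loss term with the wrong sign. It should read $x_i^{-1/2}=x_{t,i}^{-1/2}+\eta_t(\ellhat_{t,i}-c)-\eta_t\lambda$. Your relaxation argument does not actually use that display.
\end{itemize}
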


\begin{proof}
Denote by
$$
 \Phi_t\triangleq\min_{z\in\simplex}\left\{\ip{z}{\wh{L}_{t-1}}+\frac{\Psi(z)}{\eta_t}\right\}
 =\ip{x_t}{\wh{L}_{t-1}}+\frac{\Psi(x_t)}{\eta_t}
$$
the FTRL potential at round $t$. The optimality condition for $x_t$ gives $\ip{\wh{L}_{t-1}+\nabla\Psi(x_t)/\eta_t}{z-x_t}=0$ for every $z\in\simplex$. Expanding the Bregman divergence and using $\wh{L}_t=\wh{L}_{t-1}+\ellhat_t$ therefore gives the identity
\begin{align*}
 \Phi_{t+1}-\Phi_t
 =\ip{x_{t+1}}{\ellhat_t}+D_{\eta_t^{-1}\Psi}(x_{t+1},x_t)+\left(\frac1{\eta_{t+1}}-\frac1{\eta_t}\right)\Psi(x_{t+1}).
\end{align*}
Rearranging the terms and bounding the expression at $x_{t+1}$ by its maximum over the simplex yields
\begin{align*}
 \ip{x_t}{\ellhat_t}
 \le\Phi_{t+1}-\Phi_t-\left(\frac1{\eta_{t+1}}-\frac1{\eta_t}\right)\Psi(x_{t+1})+\max_{z\in\simplex}\left\{\ip{x_t-z}{\ellhat_t}-D_{\eta_t^{-1}\Psi}(z,x_t)\right\}.
\end{align*}
Since $\Phi_1=\Psi(x_1)/\eta_1$ and $\Phi_{T+1}\le\ip{u}{\wh{L}_T}+\Psi(u)/\eta_{T+1}$, summing this inequality and subtracting $\ip{u}{\wh{L}_T}$ gives
\begin{align}
 \sum_{t=1}^T\ip{x_t-u}{\ellhat_t}
 &\le\frac{\Psi(u)}{\eta_{T+1}}-\frac{\Psi(x_1)}{\eta_1}-\sum_{t=1}^T\left(\frac1{\eta_{t+1}}-\frac1{\eta_t}\right)\Psi(x_{t+1})\nonumber\\
 &\quad+\sum_{t=1}^T\max_{z\in\simplex}\left\{\ip{x_t-z}{\ellhat_t}-D_{\eta_t^{-1}\Psi}(z,x_t)\right\}.\label{eq:ftrl-tsallis-app}
\end{align}
Using $1/\eta_{T+1}=1/\eta_1+\sum_{t=1}^T(1/\eta_{t+1}-1/\eta_t)$ gives the first two terms in the lemma.

It remains to bound the maximization term in \pref{eq:ftrl-tsallis-app}. Fix a round $t$. For $z_{I_t}\ge x_{t,I_t}$, this term is nonpositive. For $z_{I_t}<x_{t,I_t}$, we have
\begin{align*}
 D_{\eta_t^{-1}\Psi}(z,x_t)
 &\ge\frac1{2\eta_t}\int_{z_{I_t}}^{x_{t,I_t}}\frac{s-z_{I_t}}{s^{3/2}}\,ds
 \ge\frac{(x_{t,I_t}-z_{I_t})^2}{4\eta_tx_{t,I_t}^{3/2}},
\end{align*}
where the first inequality uses separability of $\Psi$ and nonnegativity of the other coordinate divergences, and the second uses $s\le x_{t,I_t}$ throughout the integral. Completing the square gives
\begin{align}\label{eq:first-bound}
 \ip{x_t-z}{\ellhat_t}-D_{\eta_t^{-1}\Psi}(z,x_t)
 &\le\ellhat_{t,I_t}(x_{t,I_t}-z_{I_t})-\frac{(x_{t,I_t}-z_{I_t})^2}{4\eta_tx_{t,I_t}^{3/2}}
 \le\eta_tx_{t,I_t}^{3/2}\ellhat_{t,I_t}^2.
\end{align}
For the second bound, denote by $z_t\triangleq\argmax_{z\in\simplex}\left\{\ip{x_t-z}{\ellhat_t}-D_{\eta_t^{-1}\Psi}(z,x_t)\right\}$ the maximizer. Direct calculation shows that
\begin{align}
 \max_{z\in\simplex}\left\{\ip{x_t-z}{\ellhat_t}-D_{\eta_t^{-1}\Psi}(z,x_t)\right\}
 &\le\ip{x_t-z_t}{\ellhat_t}
 =\ellhat_{t,I_t}(x_{t,I_t}-z_{t,I_t})=\ellhat_{t,I_t}\sum_{\substack{i=1\\i\ne I_t}}^K(z_{t,i}-x_{t,i}),
 \label{eq:bobw-mass-transfer}
\end{align}
where the inequality uses nonnegativity of the Bregman divergence, the first equality uses $\ellhat_{t,i}=0$ for $i\ne I_t$, and the second uses $\sum_i x_{t,i}=\sum_i z_{t,i}=1$. Consider two cases. If $I_t\neq \argmax_{i\in[K]}x_{t,i}$, then the first bound \pref{eq:first-bound} implies the second because $x_{t,I_t}^{3/2}\le\sum_{i\ne I_t}x_{t,i}^{3/2}$. Suppose now that $I_t= \argmax_{i\in[K]}x_{t,i}$. The KKT conditions for $z_t$ give
\begin{align*}
 -\ellhat_{t,i}-\frac{\nabla_i\Psi(z_t)-\nabla_i\Psi(x_t)}{\eta_t}+\lambda&=0,\\
 \frac{1}{\eta_t\sqrt{x_{t,i}}}-\frac{1}{\eta_t\sqrt{z_{t,i}}}
 &=\lambda-\ellhat_{t,I_t}\ind\{i=I_t\},
\end{align*}
for every $i\in[K]$, where $\lambda$ is the simplex multiplier and the second equality uses $\nabla_i\Psi(x)=-1/\sqrt{x_i}$. The simplex constraint gives $0\le\lambda\le\ellhat_{t,I_t}$: outside this interval, every coordinate would strictly decrease or strictly increase. Consequently,
\begin{align*}
 0\le\eta_t\lambda\sqrt{x_{t,i}}
 &\le\frac{\eta_t\sqrt{x_{t,i}}}{q_{t,I_t}+2\eta_t\sqrt{x_{t,I_t}}}
 \le\frac{\sqrt{x_{t,i}}}{2\sqrt{x_{t,I_t}}}\le\frac12,
\end{align*}
for $i\ne I_t$, where the first upper bound uses $\lambda\le\ellhat_{t,I_t}$, $\ell_{t,I_t}\le1$, and $\rho_{t,I_t}=2\eta_t\sqrt{x_{t,I_t}}$, the second uses $q_{t,I_t}\ge0$, and the last uses maximality of $x_{t,I_t}$. Solving the KKT equation for these coordinates therefore gives
\begin{align*}
 z_{t,i}-x_{t,i}
 =\frac{x_{t,i}}{(1-\eta_t\lambda\sqrt{x_{t,i}})^2}-x_{t,i}=x_{t,i}\frac{2\eta_t\lambda\sqrt{x_{t,i}}-(\eta_t\lambda\sqrt{x_{t,i}})^2}{(1-\eta_t\lambda\sqrt{x_{t,i}})^2}
 \le8\eta_t\lambda x_{t,i}^{3/2},
\end{align*}
where the inequality uses $0\le\eta_t\lambda\sqrt{x_{t,i}}\le1/2$. Substituting into \pref{eq:bobw-mass-transfer} and using $\lambda\le\ellhat_{t,I_t}$ gives
\begin{align*}
 \max_{z\in\simplex}\left\{\ip{x_t-z}{\ellhat_t}-D_{\eta_t^{-1}\Psi}(z,x_t)\right\}
 &\le8\eta_t\lambda\ellhat_{t,I_t}\sum_{\substack{i=1\\i\ne I_t}}^Kx_{t,i}^{3/2}
 \le8\eta_t\ellhat_{t,I_t}^2\sum_{\substack{i=1\\i\ne I_t}}^Kx_{t,i}^{3/2}.
\end{align*}
Combining the two bounds yields
\begin{align}
 \max_{z\in\simplex}\left\{\ip{x_t-z}{\ellhat_t}-D_{\eta_t^{-1}\Psi}(z,x_t)\right\}
 \le\eta_t\min\left\{x_{t,I_t}^{3/2},8\sum_{\substack{i=1\\i\ne I_t}}^Kx_{t,i}^{3/2}\right\}\ellhat_{t,I_t}^2.
 \label{eq:bobw-update-cost}
\end{align}
Substituting \pref{eq:bobw-update-cost} into the FTRL decomposition proves the lemma.
\end{proof}

\section{Omitted Details in \pref{sec:bobw}}
\label{app:bobw-proof}
\label{app:bridge}
\label{app:cc-proof}
\label{app:transport-proof}

In this section, we prove \pref{lem:bobw-coordinate}, which compares consecutive iterates after the loss estimate and learning rate are updated. We then provide the complete proof for \pref{lem:bobw-joint-cost}, which gives the detailed calculation of the conditional-expectation used in \pref{sec:lemma-bobw-sketch}.

\subsection{Multiplicative Stability}
\label{sec:cc-proof}

The following lemma proves the multiplicative stability of the consecutive iterates after both the loss estimate and the learning rate are updated. 

\begin{lemma}
\phantomsection\label{lem:bobw-coordinate}
\pref{alg:lf} satisfies that
$
 x_{t+1,i}\le6x_{t,i},
$
for every $t\in[T]$ and $i\in[K]$.
\end{lemma}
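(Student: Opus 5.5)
Write the Tsallis FTRL iterate in closed form. The KKT conditions give, for every $t$ and $i$,
$$
x_{t,i}=\frac{1}{\eta_t^2\bigl(\wh{L}_{t-1,i}-\lambda_t\bigr)^2},
$$
where the multiplier $\lambda_t<\min_i\wh{L}_{t-1,i}$ is fixed by $\sum_i x_{t,i}=1$. Equivalently, $1/\sqrt{x_{t,i}}=\eta_t(\wh{L}_{t-1,i}-\lambda_t)$. I will split the passage from $x_t$ to $x_{t+1}$ into two moves and bound the multiplicative change of each separately.

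\emph{First move: the loss update at fixed learning rate.} Let $y$ be the FTRL point with cumulative loss $\wh{L}_t$ and learning rate $\eta_t$. Only coordinate $I_t$ of $\wh{L}$ increases, by $\ellhat_{t,I_t}\ge0$. Compensating the lost mass forces the multiplier to increase, with $0\le\lambda'-\lambda_t\le\ellhat_{t,I_t}$, which is the same interval argument as in the proof of \pref{lem:lf-ftrl}. For $i\ne I_t$ this gives
$$
\frac{1}{\sqrt{y_i}}=\frac{1}{\sqrt{x_{t,i}}}-\eta_t(\lambda'-\lambda_t)\ge\frac{1}{\sqrt{x_{t,i}}}-\eta_t\ellhat_{t,I_t},
$$
and for $i=I_t$ we have $y_{I_t}\le x_{t,I_t}$. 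Now use the IX offset: $\eta_t\ellhat_{t,I_t}\le\eta_t/(q_{t,I_t}+2\eta_t\sqrt{x_{t,I_t}})\le 1/(2\sqrt{x_{t,I_t}})$. This is the cross-coordinate comparison I expect to be the main obstacle, because this bound is relative to $x_{t,I_t}$, not to $x_{t,i}$. I will handle it with the exact mass balance rather than a coordinatewise bound. Set $\delta=\eta_t(\lambda'-\lambda_t)$.
\begin{itemize}
\item If $\delta\sqrt{x_{t,i}}\le1/2$, which holds in particular when $x_{t,i}\le x_{t,I_t}$, then $y_i\le4x_{t,i}$.
\item Otherwise $\sqrt{x_{t,i}}>1/(2\delta)\ge\sqrt{x_{t,I_t}}$, so $i$ is a heavier coordinate than $I_t$. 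The mass gained by the heavy coordinates then cannot exceed the mass $x_{t,I_t}-y_{I_t}\le x_{t,I_t}\le x_{t,i}$ lost by $I_t$. This gives $y_i\le x_{t,i}+x_{t,I_t}\le 2x_{t,i}$.
\end{itemize}
In both cases $y_i\le4x_{t,i}$.

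\emph{Second move: the learning-rate update at fixed loss $\wh{L}_t$.} The rate changes from $\eta_t$ to $\eta_{t+1}\le\eta_t$, with $\eta_t/\eta_{t+1}=\sqrt{1+\eta_t^2\ell_{t,I_t}}\le\sqrt{1+1/K}\le\sqrt{3/2}$. Write $1/\sqrt{x_{t+1,i}}=\eta_{t+1}(\wh{L}_{t,i}-\lambda'')$ and set $c=\eta_{t+1}/\eta_t\in[\sqrt{2/3},1]$. The point $x_{t+1}$ is the FTRL point for loss $c\wh{L}_t$ with rate $\eta_t$, and the family $s\mapsto\eta_t(\wh{L}_{t,i}-\lambda(s))$, parametrised by the scaling $s$ of the loss, is monotone. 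I plan to show that $1/\sqrt{x_{t+1,i}}\ge c\cdot 1/\sqrt{y_i}$. Indeed, the candidate $\lambda''$ equal to the old multiplier gives weights $y_i/c^2\ge y_i$, hence total mass at least one, so the true $\lambda''$ satisfies $\lambda''\le\lambda'$ and each $1/\sqrt{x_{t+1,i}}\ge c/\sqrt{y_i}$. Therefore $x_{t+1,i}\le y_i/c^2\le\tfrac32y_i$.

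Combining the two moves gives $x_{t+1,i}\le\tfrac32\cdot4x_{t,i}=6x_{t,i}$, which is exactly the claimed constant and suggests the split above matches the intended argument. The step I would verify most carefully is the mass-balance bound for heavy coordinates in the first move, where I need the tighter factor $2$ for heavy coordinates (or at most $4$ overall) to keep the final constant at $6$.
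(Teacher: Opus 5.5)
Your proof is correct. It uses the same ingredients as the paper's proof: the closed-form KKT representation, the bound of the multiplier shift by $\wh\ell_{t,I_t}$, the IX offset giving $\eta_t\wh\ell_{t,I_t}\le1/(2\sqrt{x_{t,I_t}})$ for coordinates lighter than $I_t$, mass balance for heavier coordinates, and $(\eta_t/\eta_{t+1})^2\le3/2$. What differs is how you organize them.

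The paper introduces no intermediate iterate. It subtracts the two KKT systems (in its sign convention $\wh{L}_{t-1,i}+\lambda_t=1/(\eta_t\sqrt{x_{t,i}})$) to get the single relation $x_{t+1,i}=(\eta_t/\eta_{t+1})^2x_{t,i}/(1+\eta_t\sqrt{x_{t,i}}(\wh\ell_{t,i}+\lambda_{t+1}-\lambda_t))^2$. It must then split on whether $\lambda_{t+1}\ge\lambda_t$, because the loss increment and the shrinking learning rate push the multiplier in opposite directions. If the multiplier does not decrease, every coordinate gets factor $3/2$. Otherwise, heavy coordinates get factor $2$ by mass balance applied directly to $x_{t+1}$ versus $x_t$, and light coordinates get $4\cdot\frac32$.

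Your factorization through $y$ removes this case split. At a fixed rate the shift automatically lies in $[0,\wh\ell_{t,I_t}]$. The rate step then becomes a standalone comparison: shrinking $\eta$ at fixed $\wh{L}$ inflates every coordinate by at most $(\eta_t/\eta_{t+1})^2$, independently of the estimator and of the particular schedule. It is your candidate-multiplier argument that carries this step; the remark about a monotone family is not needed.

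The cost is slightly looser per-coordinate constants, e.g.\ $3$ instead of $2$ for heavy coordinates. The final constant $6$ is unchanged.
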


\begin{proof}
Fix a round $t$. Denote by $\lambda_t$ and $\lambda_{t+1}$ the simplex multipliers for the two consecutive FTRL updates. Their KKT equations give
\begin{align*}
 \wh{L}_{t-1,i}+\lambda_t&=\frac{1}{\eta_t\sqrt{x_{t,i}}},\\
 \wh{L}_{t,i}+\lambda_{t+1}&=\frac{1}{\eta_{t+1}\sqrt{x_{t+1,i}}},
\end{align*}
for every $i\in[K]$. Subtracting the first equation from the second and using $\wh{L}_t=\wh{L}_{t-1}+\widehat\ell_t$, we obtain that
\begin{align*}
 \frac{1}{\eta_{t+1}\sqrt{x_{t+1,i}}}
 =\frac{1}{\eta_t\sqrt{x_{t,i}}}
   +\widehat\ell_{t,i}+\lambda_{t+1}-\lambda_t=\frac{1+\eta_t\sqrt{x_{t,i}}
   (\widehat\ell_{t,i}+\lambda_{t+1}-\lambda_t)}
   {\eta_t\sqrt{x_{t,i}}},
\end{align*}
where the numerator $1+\eta_t\sqrt{x_{t,i}}
   (\widehat\ell_{t,i}+\lambda_{t+1}-\lambda_t)$ is positive as the left-hand side is positive. Taking reciprocals and squaring therefore gives
$$
 x_{t+1,i}
 =\frac{\left(\frac{\eta_t}{\eta_{t+1}}\right)^2x_{t,i}}
 {\left(1+\eta_t\sqrt{x_{t,i}}
 (\widehat\ell_{t,i}+\lambda_{t+1}-\lambda_t)\right)^2}.
$$
The learning-rate update implies $(\eta_t/\eta_{t+1})^2=1+\eta_t^2\ell_{t,I_t}\le3/2$, since $\ell_{t,I_t}\le1$, $\eta_t^2\le1/K$, and $K\ge2$. If $\lambda_{t+1}\ge\lambda_t$, the denominator in the coordinate formula is at least one because $\widehat\ell_{t,i}\ge0$, giving $x_{t+1,i}\le3x_{t,i}/2$ for every $i$.

It remains to consider $\lambda_{t+1}<\lambda_t$. For every $i\ne I_t$, we have $\widehat\ell_{t,i}=0$, so the coordinate formula and $\eta_{t+1}\le\eta_t$ imply $x_{t+1,i}>x_{t,i}$. Since both iterates sum to one, this also gives $x_{t+1,I_t}\le x_{t,I_t}$. For $i\ne I_t$ with $x_{t,i}\ge x_{t,I_t}$, we know that
\begin{align*}
 x_{t+1,i}-x_{t,i}
 \le\sum_{\substack{j=1\\j\ne I_t}}^K
       (x_{t+1,j}-x_{t,j})=x_{t,I_t}-x_{t+1,I_t}
 \le x_{t,I_t}
 \le x_{t,i},
\end{align*}
where the first inequality uses the nonnegativity of every summand, and the equality uses the simplex constraints. Thus, $x_{t+1,i}\le2x_{t,i}$ for these coordinates.

To bound the remaining coordinates, we first show that $\lambda_{t+1}\ge\lambda_t-\widehat\ell_{t,I_t}$. Otherwise, $\widehat\ell_{t,i}+\lambda_{t+1}-\lambda_t<0$ for every $i\in[K]$, so the coordinate formula and $\eta_{t+1}\le\eta_t$ would imply $x_{t+1,i}>x_{t,i}$ for every $i$, contradicting the simplex constraints. Consequently, for $i\ne I_t$ with $x_{t,i}<x_{t,I_t}$, we have
\begin{align*}
 \eta_t\sqrt{x_{t,i}}(\lambda_t-\lambda_{t+1})
\le\eta_t\sqrt{x_{t,i}}\widehat\ell_{t,I_t}\le\frac{\eta_t\sqrt{x_{t,i}}}{\rho_{t,I_t}}
 =\frac{\sqrt{x_{t,i}}}{2\sqrt{x_{t,I_t}}}
 <\frac12,
\end{align*}
where the second inequality uses $\widehat\ell_{t,I_t}=\ell_{t,I_t}/(q_{t,I_t}+\rho_{t,I_t})\le1/\rho_{t,I_t}$, and the equality uses $\rho_{t,I_t}=2\eta_t\sqrt{x_{t,I_t}}$. Substituting this bound into the coordinate formula gives
$$
 x_{t+1,i}
 =\frac{\left(\frac{\eta_t}{\eta_{t+1}}\right)^2x_{t,i}}
 {\left(1-\eta_t\sqrt{x_{t,i}}(\lambda_t-\lambda_{t+1})\right)^2}
 \le4\left(\frac{\eta_t}{\eta_{t+1}}\right)^2x_{t,i}
 \le6x_{t,i}.
$$
Combining the bounds for all coordinates proves the lemma.
\end{proof}

\subsection{Proof of \pref{lem:bobw-joint-cost}}
\label{app:bobw-joint-details}

\begin{proof}
We first apply \pref{lem:lf-ftrl} with $u=e_{i^\star}$, which gives \pref{eq:bobw-inter}:
\begin{align*}
 \sum_{t=1}^T\ip{x_t-e_{i^\star}}{\ellhat_t}
 &\le\frac{\Psi(e_{i^\star})-\Psi(x_1)}{\eta_1}
 +\sum_{t=1}^T\left(\frac{1}{\eta_{t+1}}-\frac{1}{\eta_t}\right)(\Psi(e_{i^\star})-\Psi(x_{t+1}))\\
 &\quad+\sum_{t=1}^T\eta_t\min\left\{x_{t,I_t}^{3/2},8\sum_{\substack{i=1\\i\ne I_t}}^Kx_{t,i}^{3/2}\right\}\ellhat_{t,I_t}^2.
\end{align*}
We then prove the conditional bound in \pref{eq:bobw-inter-2} for the last term in \pref{eq:bobw-inter}, then derive the difference between true and estimated regret in \pref{eq:bobw-inter-3}, and finally combine these bounds to obtain \pref{eq:bobw-joint-cost}.

All conditional expectations below are conditioned on $\mathcal F_{t-1}$. Direct calculation shows that
\begin{align*}
 \E[\ellhat_{t,i}\mid\mathcal F_{t-1}]
 &=\frac{q_{t,i}\mu_i}{q_{t,i}+\rho_{t,i}},\\
 \E[\ellhat_{t,i}^2\mid\mathcal F_{t-1}]
 &=\frac{q_{t,i}\E[\ell_{t,i}^2]}{(q_{t,i}+\rho_{t,i})^2}
 \le\frac{q_{t,i}\mu_i}{(q_{t,i}+\rho_{t,i})^2},
\end{align*}
where the equalities use independence of the current loss vector from the history and the fresh action. The sampling rule in \pref{line:lf-sampling} also gives
\begin{align*}
 \frac{x_{t,i}}{q_{t,i}+\rho_{t,i}}
 &\le Z_t\frac{q_{t,i}+2\rho_{t,i}}{q_{t,i}+\rho_{t,i}}
 \le2Z_t,
\end{align*}
where the first inequality follows from $q_{t,i}=[x_{t,i}/Z_t-2\rho_{t,i}]_+\ge x_{t,i}/Z_t-2\rho_{t,i}$. Hence, for every suboptimal arm $i\ne i^\star$, we have
\begin{align*}
 \eta_tx_{t,i}^{3/2}\E[\ellhat_{t,i}^2\mid\mathcal F_{t-1}]
 &\le\frac{\eta_tx_{t,i}^{3/2}q_{t,i}\mu_i}{(q_{t,i}+\rho_{t,i})^2}\\
 &=\frac{\rho_{t,i}}{2}\frac{x_{t,i}}{q_{t,i}+\rho_{t,i}}\frac{q_{t,i}\mu_i}{q_{t,i}+\rho_{t,i}}\\
 &\le Z_t\frac{q_{t,i}\rho_{t,i}\mu_i}{q_{t,i}+\rho_{t,i}}\\
 &=Z_t\frac{q_{t,i}\rho_{t,i}\Delta_i}{q_{t,i}+\rho_{t,i}}+Z_t\frac{q_{t,i}\rho_{t,i}\mu_{i^\star}}{q_{t,i}+\rho_{t,i}}\\
 &\le Z_t\frac{q_{t,i}\rho_{t,i}\Delta_i}{q_{t,i}+\rho_{t,i}}+2\eta_t\mu_{i^\star}\sqrt{x_{t,i}},
\end{align*}
where the first equality uses $\rho_{t,i}=2\eta_t\sqrt{x_{t,i}}$, the second inequality uses $x_{t,i}/(q_{t,i}+\rho_{t,i})\le2Z_t$, and the last inequality uses $Z_t\le1$ and $q_{t,i}/(q_{t,i}+\rho_{t,i})\le1$.

For the best arm, if $i^\star\neq \argmax_{i\in[K]}x_{t,i}$, the same calculation gives
\begin{align*}
 \eta_tx_{t,i^\star}^{3/2}\E[\ellhat_{t,i^\star}^2\mid\mathcal F_{t-1}]
 &\le\mu_{i^\star}\rho_{t,i^\star}
 =2\eta_t\mu_{i^\star}\sqrt{x_{t,i^\star}}
 \le2\eta_t\mu_{i^\star}\sum_{\substack{i=1\\i\ne i^\star}}^K\sqrt{x_{t,i}},
\end{align*}
where the last inequality follows because another coordinate is at least $x_{t,i^\star}$. Otherwise, we have
\begin{align*}
 \frac{q_{t,i}}{x_{t,i}}
 &=\left[\frac1{Z_t}-\frac{4\eta_t}{\sqrt{x_{t,i}}}\right]_+
 \le\left[\frac1{Z_t}-\frac{4\eta_t}{\sqrt{x_{t,i^\star}}}\right]_+
 =\frac{q_{t,i^\star}}{x_{t,i^\star}},
\end{align*}
for every $i\in[K]$, where the equalities follow from \pref{line:lf-sampling} and the inequality uses $x_{t,i}\le x_{t,i^\star}$. Multiplying by $x_{t,i}$ and summing gives
\begin{align*}
 1=\sum_{i=1}^Kq_{t,i}
 =\sum_{i=1}^Kx_{t,i}\frac{q_{t,i}}{x_{t,i}}
 \le\sum_{i=1}^Kx_{t,i}\frac{q_{t,i^\star}}{x_{t,i^\star}}
 =\frac{q_{t,i^\star}}{x_{t,i^\star}}.
\end{align*}
Thus $q_{t,i^\star}\ge x_{t,i^\star}$, and we have
\begin{align*}
 8\eta_t\sum_{\substack{i=1\\i\ne i^\star}}^Kx_{t,i}^{3/2}\E[\ellhat_{t,i^\star}^2\mid\mathcal F_{t-1}]
 &\le\frac{8\eta_tq_{t,i^\star}\mu_{i^\star}}{(q_{t,i^\star}+\rho_{t,i^\star})^2}\sum_{\substack{i=1\\i\ne i^\star}}^Kx_{t,i}^{3/2}\\
 &\le8\eta_t\mu_{i^\star}\sum_{\substack{i=1\\i\ne i^\star}}^K\frac{x_{t,i}}{q_{t,i^\star}}\sqrt{x_{t,i}}
 \le8\eta_t\mu_{i^\star}\sum_{\substack{i=1\\i\ne i^\star}}^K\sqrt{x_{t,i}},
\end{align*}
where the last inequality uses $x_{t,i}\le x_{t,i^\star}\le q_{t,i^\star}$. Combining the suboptimal-arm bounds with the two cases for arm $i^\star$ yields
\begin{align}
 &\E\left[\eta_t\min\left\{x_{t,I_t}^{3/2},8\sum_{i=1,i\ne I_t}^Kx_{t,i}^{3/2}\right\}\ellhat_{t,I_t}^2\mid\mathcal F_{t-1}\right]\nonumber\\
 &\le\sum_{\substack{i=1\\i\ne i^\star}}^K\eta_tx_{t,i}^{3/2}\E[\ellhat_{t,i}^2\mid\mathcal F_{t-1}]
 +\eta_t\min\left\{x_{t,i^\star}^{3/2},8\sum_{\substack{i=1\\i\ne i^\star}}^Kx_{t,i}^{3/2}\right\}\E[\ellhat_{t,i^\star}^2\mid\mathcal F_{t-1}]\nonumber\\
 &\le Z_t\sum_{\substack{i=1\\i\ne i^\star}}^K\frac{q_{t,i}\rho_{t,i}\Delta_i}{q_{t,i}+\rho_{t,i}}+10\eta_t\mu_{i^\star}\sum_{\substack{i=1\\i\ne i^\star}}^K\sqrt{x_{t,i}},
 \label{eq:bobw-expected-cost}
\end{align}
where the first inequality uses $\ellhat_{t,i}=0$ for $i\ne I_t$ and the $\mathcal F_{t-1}$-measurability of the coefficients, and the second uses the preceding coordinate bounds. This proves \pref{eq:bobw-inter-2}.

We next compute the difference between true and estimated regret. Expanding the conditional means gives
\begin{align}
 &\E\left[\ip{q_t-e_{i^\star}}{\ell_t}-\ip{x_t-e_{i^\star}}{\ellhat_t}\mid\mathcal F_{t-1}\right]\nonumber\\
 &=\ip{q_t}{\mu}-\mu_{i^\star}-\sum_{i=1}^K\frac{x_{t,i}q_{t,i}\mu_i}{q_{t,i}+\rho_{t,i}}+\frac{q_{t,i^\star}\mu_{i^\star}}{q_{t,i^\star}+\rho_{t,i^\star}}\nonumber\\
 &=\sum_{\substack{i=1\\i\ne i^\star}}^K\Delta_i\left(q_{t,i}-\frac{x_{t,i}q_{t,i}}{q_{t,i}+\rho_{t,i}}\right)
 +\mu_{i^\star}\left(\frac{q_{t,i^\star}}{q_{t,i^\star}+\rho_{t,i^\star}}-\sum_{i=1}^K\frac{x_{t,i}q_{t,i}}{q_{t,i}+\rho_{t,i}}\right),\label{eq:bias-bobw}
\end{align}
where the last equality substitutes $\mu_i=\mu_{i^\star}+\Delta_i$ for $i\ne i^\star$ and uses $\sum_{i=1}^Kq_{t,i}=1$. For the first summation, using \pref{eq:lf-matching}, we can obtain that
\begin{align*}
 \frac{x_{t,i}q_{t,i}}{q_{t,i}+\rho_{t,i}}
 &=\frac{x_{t,i}q_{t,i}}{q_{t,i}+2\rho_{t,i}}\frac{q_{t,i}+2\rho_{t,i}}{q_{t,i}+\rho_{t,i}}
 =Z_tq_{t,i}+Z_t\frac{q_{t,i}\rho_{t,i}}{q_{t,i}+\rho_{t,i}}.
\end{align*}
Substituting this identity into the first term in \pref{eq:bias-bobw} yields
\begin{align*}
 \sum_{\substack{i=1\\i\ne i^\star}}^K\Delta_i\left(q_{t,i}-\frac{x_{t,i}q_{t,i}}{q_{t,i}+\rho_{t,i}}\right)
 &=(1-Z_t)\sum_{\substack{i=1\\i\ne i^\star}}^K\Delta_iq_{t,i}
 -Z_t\sum_{\substack{i=1\\i\ne i^\star}}^K\frac{q_{t,i}\rho_{t,i}\Delta_i}{q_{t,i}+\rho_{t,i}}.
\end{align*}
For the second term in \pref{eq:bias-bobw}, direct calculation shows that
\begin{align*}
 \frac{q_{t,i^\star}}{q_{t,i^\star}+\rho_{t,i^\star}}-\sum_{i=1}^K\frac{x_{t,i}q_{t,i}}{q_{t,i}+\rho_{t,i}}
 &=1-\frac{\rho_{t,i^\star}}{q_{t,i^\star}+\rho_{t,i^\star}}-\sum_{i=1}^Kx_{t,i}+\sum_{i=1}^K\frac{x_{t,i}\rho_{t,i}}{q_{t,i}+\rho_{t,i}}\\
 &=\sum_{\substack{i=1\\i\ne i^\star}}^K\frac{x_{t,i}\rho_{t,i}}{q_{t,i}+\rho_{t,i}}-\frac{(1-x_{t,i^\star})\rho_{t,i^\star}}{q_{t,i^\star}+\rho_{t,i^\star}},
\end{align*}
where the first equality uses $q_{t,i}/(q_{t,i}+\rho_{t,i})=1-\rho_{t,i}/(q_{t,i}+\rho_{t,i})$ and the second uses $\sum_{i=1}^Kx_{t,i}=1$ and separates the term indexed by $i^\star$. Combining the two calculations gives
\begin{align}
 &\E\left[\ip{q_t-e_{i^\star}}{\ell_t}-\ip{x_t-e_{i^\star}}{\ellhat_t}\mid\mathcal F_{t-1}\right]\nonumber\\
 &=(1-Z_t)\sum_{\substack{i=1\\i\ne i^\star}}^K\Delta_iq_{t,i}
 -Z_t\sum_{\substack{i=1\\i\ne i^\star}}^K\frac{q_{t,i}\rho_{t,i}\Delta_i}{q_{t,i}+\rho_{t,i}}+\mu_{i^\star}\left(\sum_{\substack{i=1\\i\ne i^\star}}^K\frac{x_{t,i}\rho_{t,i}}{q_{t,i}+\rho_{t,i}}-\frac{(1-x_{t,i^\star})\rho_{t,i^\star}}{q_{t,i^\star}+\rho_{t,i^\star}}\right),
 \label{eq:bobw-signed-bias}
\end{align}
which proves \pref{eq:bobw-inter-3}.

We now substitute these bounds into the regret decomposition. Applying \pref{eq:bobw-inter}, \pref{eq:bobw-expected-cost}, and \pref{eq:bobw-signed-bias} gives
\begin{align*}
 \Reg_T
 &=\sum_{t=1}^T\E\left[\ip{x_t-e_{i^\star}}{\ellhat_t}\right]
 +\sum_{t=1}^T\E\left[\ip{q_t-e_{i^\star}}{\ell_t}-\ip{x_t-e_{i^\star}}{\ellhat_t}\right]\\
 &\le\frac{\Psi(e_{i^\star})-\Psi(x_1)}{\eta_1}
 +\E\left[\sum_{t=1}^T\left(\frac{1}{\eta_{t+1}}-\frac{1}{\eta_t}\right)(\Psi(e_{i^\star})-\Psi(x_{t+1}))\right]\\
 &\quad+\sum_{t=1}^T\E\left[Z_t\sum_{\substack{i=1\\i\ne i^\star}}^K\frac{q_{t,i}\rho_{t,i}\Delta_i}{q_{t,i}+\rho_{t,i}}
 +10\eta_t\mu_{i^\star}\sum_{\substack{i=1\\i\ne i^\star}}^K\sqrt{x_{t,i}}\right]\\
 &\quad+\sum_{t=1}^T\E\left[(1-Z_t)\sum_{\substack{i=1\\i\ne i^\star}}^K\Delta_iq_{t,i}
 -Z_t\sum_{\substack{i=1\\i\ne i^\star}}^K\frac{q_{t,i}\rho_{t,i}\Delta_i}{q_{t,i}+\rho_{t,i}}\right]\\
 &\quad+\sum_{t=1}^T\E\left[\mu_{i^\star}\left(\sum_{\substack{i=1\\i\ne i^\star}}^K\frac{x_{t,i}\rho_{t,i}}{q_{t,i}+\rho_{t,i}}
 -\frac{(1-x_{t,i^\star})\rho_{t,i^\star}}{q_{t,i^\star}+\rho_{t,i^\star}}\right)\right],
\end{align*}
The two sums involving $Z_tq_{t,i}\rho_{t,i}\Delta_i/(q_{t,i}+\rho_{t,i})$ cancel. For the last expectation, we have
\begin{align*}
 \mu_{i^\star}\left(\sum_{\substack{i=1\\i\ne i^\star}}^K\frac{x_{t,i}\rho_{t,i}}{q_{t,i}+\rho_{t,i}}-\frac{(1-x_{t,i^\star})\rho_{t,i^\star}}{q_{t,i^\star}+\rho_{t,i^\star}}\right)&\le\mu_{i^\star}\sum_{\substack{i=1\\i\ne i^\star}}^K\frac{x_{t,i}\rho_{t,i}}{q_{t,i}+\rho_{t,i}}\\
 &\le2Z_t\mu_{i^\star}\sum_{\substack{i=1\\i\ne i^\star}}^K\rho_{t,i}
 \le4\eta_t\mu_{i^\star}\sum_{\substack{i=1\\i\ne i^\star}}^K\sqrt{x_{t,i}},
\end{align*}
where the first inequality drops the nonpositive term, the second uses $x_{t,i}/(q_{t,i}+\rho_{t,i})\le2Z_t$, and the last uses $Z_t\le1$ and $\rho_{t,i}=2\eta_t\sqrt{x_{t,i}}$. Substituting this bound into the preceding regret inequality gives
\begin{align*}
 \Reg_T
 &\le\frac{\Psi(e_{i^\star})-\Psi(x_1)}{\eta_1}
 +\E\left[\sum_{t=1}^T\left(\frac{1}{\eta_{t+1}}-\frac{1}{\eta_t}\right)(\Psi(e_{i^\star})-\Psi(x_{t+1}))\right]\\
 &\quad+\sum_{t=1}^T\E\left[(1-Z_t)\sum_{\substack{i=1\\i\ne i^\star}}^K\Delta_iq_{t,i}\right]
 +14\sum_{t=1}^T\E\left[\eta_t\mu_{i^\star}\sum_{\substack{i=1\\i\ne i^\star}}^K\sqrt{x_{t,i}}\right],
\end{align*}
which finishes the proof.
\end{proof}

\end{document}